\documentclass{article}

\usepackage{microtype}
\usepackage{graphicx}
\usepackage{subcaption}
\usepackage{booktabs} 

\usepackage{hyperref}

\usepackage[accepted]{icml2026}

\usepackage{amsmath}
\usepackage{amssymb}
\usepackage{mathtools}
\usepackage{amsthm}

\usepackage[capitalize,noabbrev]{cleveref}

\theoremstyle{plain}
\newtheorem{theorem}{Theorem}[section]
\newtheorem{proposition}[theorem]{Proposition}

\theoremstyle{definition}
\newtheorem{definition}[theorem]{Definition}

\theoremstyle{remark}

\usepackage[textsize=tiny]{todonotes}

\icmltitlerunning{Mean Flow Distillation: Robust and Stable Distillation for Flow Matching Models}

\begin{document}

\twocolumn[
  \icmltitle{Mean Flow Distillation: \\ Robust and Stable Distillation for Flow Matching Models}



  \icmlsetsymbol{equal}{*}

  \begin{icmlauthorlist}
    \icmlauthor{An Zhao}{zju}
    \icmlauthor{Shengyuan Zhang}{zju}
    \icmlauthor{Zhongjian Sun}{zjusoft}
    \icmlauthor{Yixiang Zhou}{zju}
    \icmlauthor{Zejian Li}{zjusoft}
    \icmlauthor{Ling Yang}{pku}
    \icmlauthor{Tianrun Chen}{zju}
    \icmlauthor{Lingyun Sun}{zjuai}
  \end{icmlauthorlist}

  \icmlaffiliation{zju}{College of Computer Science and Technology, Zhejiang University, Zhejiang, China}
  \icmlaffiliation{zjuai}{College of Artificial Intelligence, Zhejiang University, Zhejiang, China}
  \icmlaffiliation{zjusoft}{School of Software Technology, Zhejiang University, Zhejiang, China}
  \icmlaffiliation{pku}{Princeton University, Princeton, USA}

  \icmlcorrespondingauthor{Zejian Li}{zejianlee@zju.edu.cn}

  \icmlkeywords{Diffusion Models, Distillation, Flow Matching} 

  \vskip 0.3in
]



\printAffiliationsAndNotice{}  

\begin{abstract}
Flow Matching models have demonstrated strong performance across a wide range of generative tasks.
However, their reliance on ODE-based iterative sampling incurs substantial computational overhead in inference, which limits their applicability in real-time scenes. 
While distillation is a promising solution, existing approaches largely borrow from diffusion-based score matching, often failing to exploit the intrinsic geometric structure of flows and suffering from training instability, high variance, and degraded generation quality.
In this paper, we propose Mean Flow Distillation (MFD), a novel distillation framework tailored for flow matching models.
We theoretically demonstrate that MFD acts as a temporal low-pass filter, effectively suppressing the high-frequency optimization noise inherent in variational score distillation (VSD) while ensuring global trajectory consistency. We further prove the Mean Flow Matching Theorem, establishing that matching expected average velocities is sufficient for strict distribution alignment. Empirically, on challenging tasks of high-dimensional manifolds including 4D occupancy forecasting and text-to-image generation, MFD achieves state-of-the-art performance, enabling high-fidelity single-step generation.
The implementation is available at \url{https://github.com/happyw1nd/MFD}.
\end{abstract}

\begin{figure*}[t]
  \vskip 0.2in
  \begin{center}
    \centerline{\includegraphics[width=0.9\linewidth]{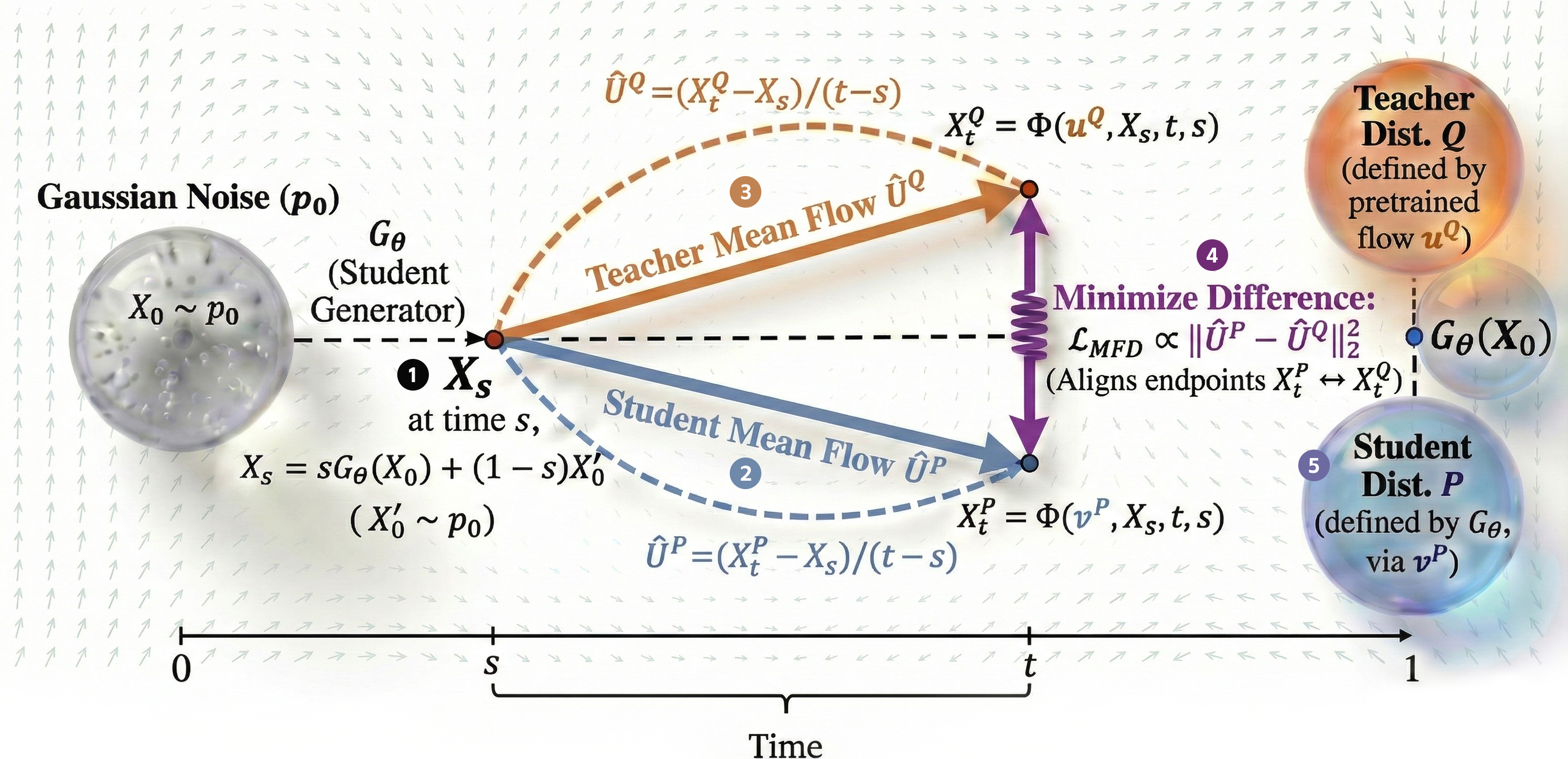}}
    \caption{
     Overview of Mean Flow Distillation (MFD). \textbf{(1)} Sample noise $X_0 \sim \mathcal{N}(0, I)$, generate $X_1$ via the single-step student model $G_\theta$, and construct intermediate state $X_s$ through linear interpolation following Eq (\ref{eq:interpolation}) with $t$ replaced by $s$. \textbf{(2)} Compute the auxiliary mean flow by integrating the auxiliary model $v^P$ from $X_s$ over interval $[s, t]$ via ODE solver $\Phi$ following Eq (\ref{eq:displacement}). \textbf{(3)} Compute the teacher mean flow by integrating the pretrained teacher model $u^Q$ from the same $X_s$ over $[s, t]$ also following Eq (\ref{eq:displacement}). \textbf{(4)} Optimize the student model $G_\theta$ by matching the two mean flows, with gradients backpropagated through $X_s$ following Eq~(\ref{eq:final_gradient}). \textbf{(5)} Update the auxiliary model $v^P$ to track the evolving student distribution $P$ following Eq~(\ref{eq:loss}). This temporal integration over $[s, t]$ provides more robust and low-variance gradients compared to instantaneous velocity matching, enabling stable distillation with high-quality generation.
    }
    \label{fig:banner}
  \end{center}
\end{figure*}

\section{Introduction}
In recent years, flow-based generative modeling based on Flow Matching (FM) and stochastic interpolants~\cite{flow_matching,flow_image,stochastic_interpolants} has emerged as a powerful paradigm, demonstrating remarkable performance in applications like image generation~\cite{yimingdefog} and 3D modelling~\cite{trellis}. Unlike score-based diffusion models~\cite{DMD2,prolificdreamer}, FM directly learns a vector field connecting marginal distributions~\cite{flow_matching,flow_image}, offering a more explicit dynamical description and superior generation quality. However, FM relies on time-consuming iterative sampling~\cite{Diff-Instruct,flow_gude}. As model scales grow, high inference latency limits FM's applicability in real-time or resource-constrained scenarios~\cite{flow_socre_dis}.

The existing acceleration methods, such as advanced schedulers~\cite{PNDM,DPMsolver} or truncation~\cite{Truncated}, often fail to maintain quality in few-step regimes. Distillation remains the most promising strategy~\cite{DMD2,Diff-Instruct}, yet existing approaches are primarily designed for score-based formulations. Directly adapting these to FM often involves indirect score conversions~\cite{flow_socre_dis}, failing to exploit FM's intrinsic geometric structures. This mismatch typically leads to training instability, high variance, and degraded generation quality~\cite{senseflow}.

To address these challenges, we propose \textbf{Mean Flow Distillation (MFD)}, a general-purpose framework tailored for pretrained FM models. Unlike methods that forcibly match instantaneous velocity fields or rely on high-variance score estimates~\cite{senseflow, rfds}, MFD utilizes Mean Flow, the time-integrated velocity field, as the core alignment metric. Our key insight is that the cumulative displacement induced by the mean flow provides a stable, global consistency constraint, effectively filtering out local geometric perturbations common in instantaneous matching.

Technically, MFD alternates between a learnable Auxiliary Flow Model, which approximates the student-induced vector field, and the student, which matches the auxiliary model's average velocity to that of a frozen teacher via multi-step ODE integration. The auxiliary model is used only during training and can be implemented with lightweight adapters in large-scale settings (\cref{sec:training_cost}). Theoretically, MFD acts as a temporal low-pass filter that suppresses high-frequency optimization noise. We establish a Mean Flow Matching Theorem, proving that matching the expected average velocity is sufficient for strict distribution alignment, and show that MFD yields lower variance than variational score distillation (VSD)~\cite{prolificdreamer}.

Our contributions are summarized as follows:
\begin{itemize}
    \item We propose Mean Flow Distillation (MFD), the first distillation paradigm that directly aligns velocity fields for Flow Matching models, avoiding indirect score conversion. By using the time-integrated mean flow as the supervision signal, MFD enables stable and high-quality distillation.
    \item We formalize Instantaneous Velocity Field Matching and prove the Mean Flow Matching Theorem, showing that matching the mean flow of models will achieve strict distributional alignment. We further theoretically analyze the gradient variance and demonstrate that MFD yields lower variance and more stable training dynamics than variational score distillation.
    \item  On two challenging tasks, text-to-image generation and 4D occupancy forecasting, MFD outperforms existing distillation methods, including score-based and consistency-based flow distillation, in generation quality and distributional fidelity.
\end{itemize}

\paragraph{Conflict of Interest Disclosure.}
The authors declare no conflicts of interest.

\section{Related Works}

\subsection{Flow Matching}
Lipman et al.~\yrcite{flow_image} introduced Flow Matching (FM) to train continuous normalizing flows by regressing velocity fields along probability paths, while stochastic interpolants~\cite{stochastic_interpolants} independently developed a closely related continuous-time construction based on interpolating paths. Concurrently, Rectified Flow~\cite{flow_matching} emphasized straightening transport trajectories to approach geodesics, significantly reducing integration error for few-step sampling. FM has since matured into a general-purpose paradigm applied to text-to-image~\cite{lumina,sana}, 3D generation~\cite{trellis,chenflow} and 4D occupancy forecasting~\cite{OccFM}. 

Recent research targets one-step generation within this framework. Mean Flow (MF)~\cite{MF} replaces instantaneous velocity supervision with time-integrated mean velocity, achieving one-step generation without distillation. Improved Mean Flow~\cite{IMF} further refines this by addressing target-network dependencies. Inductive Moment Matching~\cite{imm} proposes a single-stage training procedure for few-step generation, guaranteeing distribution-level convergence without requiring pre-trained initialization. However, these methods are designed for end-to-end training from scratch. Their objectives cannot be directly applied to distill existing high-quality pretrained models, limiting their utility as general-purpose acceleration mechanisms. We provide a detailed comparison between MFD, MF, and related flow distillation methods in~\cref{SM:relations_prior}.

\subsection{Sampling Acceleration via Distillation}
Distillation methods aim to compress the computationally expensive iterative sampling process into a few steps or even a single step. These approaches generally fall into two main categories: Consistency Distillation and Score Distillation.

The principle of consistency distillation is to enforce trajectory self-consistency, ensuring that points along the probability flow ODE consistently map to corresponding initial states. Progressive Distillation~\cite{PD} achieves acceleration by iteratively halving the number of sampling steps. Consistency Models~\cite{consistency_model} propose learning a mapping from any timestep to the data origin. Latent Consistency Models (LCM)~\cite{LCM} apply this paradigm to latent space, while Multistep Consistency Models (MCM)~\cite{MCM} introduce a generalized formulation that bridges the gap between single-step and multi-step generation regimes. Align Your Flow (AYF)~\cite{alignYourFlow} introduces continuous-time flow map distillation, generalizing consistency models to connect arbitrary noise while maintaining effectiveness across all steps. AYF enforces sample-level flow-map consistency, whereas MFD matches distribution-level expected average velocity fields; this distinction is analyzed in~\cref{SM:relations_prior}.

Instead of relying on trajectory-level constraints, score distillation optimizes the student generator by aligning its gradients with the density fields implied by the teacher model. Variational Score Distillation (VSD)~\cite{prolificdreamer} utilizes the teacher as a score-based prior to guide the student's learning. Techniques such as Diff-Instruct~\cite{Diff-Instruct}, DMD~\cite{DMD}, and DMD2~\cite{DMD2} explicitly minimize the KL divergence between the student and teacher distributions through score matching objectives. More recent advances~\cite{ADD} further enhance generation quality by combining score distillation with adversarial training objectives. Additionally, Score Identity Distillation (SiD)~\cite{SiD} accelerates the distillation process by leveraging novel score identities that eliminate the need for real data components. SwiftBrush~\cite{SwiftBrush} demonstrates the feasibility of image-free distillation using variational score distillation.

Recently, several works apply these techniques to flow matching models~\cite{InstaFlow, senseflow}. However, these methods typically necessitate converting velocity fields into score functions~\cite{flow_socre_dis}. This conversion is often suboptimal, as it fails to fully exploit the deterministic ODE structures intrinsic to flow matching.

\section{Preliminaries}
\subsection{Flow Matching}
Flow Matching is a class of methods that perform distribution transformation by learning ordinary differential equations (ODEs). The objective is to directly learn a velocity field that maps an initial distribution $\pi_0$ to a target distribution $\pi_1$, without relying on stochastic diffusion processes. Rectified Flow provides a simple yet effective instantiation of flow matching by directly regressing toward straight-line interpolation paths between noise and data, yielding a straightforward MSE training objective.

Consider two probability distributions $\pi_0$ and $\pi_1$ in $\mathbb{R}^d$. Flow matching tries to learn an ODE that defines a deterministic transport process mapping samples from $\pi_0$ to $\pi_1$
\begin{equation}
\label{eq:ode}
    \frac{dZ_t}{dt} = v(Z_t,t), \quad t\in[0,1],
\end{equation}
The velocity field $v : \mathbb{R}^d \times [0,1] \to \mathbb{R}^d$ is learned in a data-driven manner and satisfies $Z_0 \sim \pi_0$ and $Z_1 \sim \pi_1$.

The core idea of flow matching is to adopt a simple interpolation process as a reference trajectory and train a model to learn the corresponding dynamics to match this trajectory. Specifically, a linear interpolation between $X_0$ and $X_1$ is:
\begin{equation}
\label{eq:interpolation}
    X_t = (1-t)X_0 + tX_1,
\end{equation}
The time derivative is constant, given by $\dot{X}_t = X_1 - X_0$. Rather than directly using this non-causal path, which depends on future information, rectified flow learns a velocity field $v(X_t, t)$ that approximates the instantaneous velocity of the interpolation trajectory by minimizing a mean squared error (MSE) objective. Concretely, the velocity field is learned by minimizing the following least-squares objective:
\begin{equation}
\label{eq:loss}
    \min _{v} \mathbb{E}_{X_{0}, X_{1}, t \sim \mathcal{U}[0,1]}\left[\left\|\left(X_{1}-X_{0}\right)-v\left(X_{t}, t\right)\right\|^{2}\right]
\end{equation}

The optimal solution to~\cref{eq:loss} admits a closed-form expression in terms of a conditional expectation:
\begin{equation}
\label{eq:optimal}
    v^{*}(x, t)=\mathbb{E}\left[X_{1}-X_{0} \mid X_{t}=x\right],
\end{equation}
That is, conditioned on a given position $X$ and time $t$, the velocity field equals the expected direction of all linear paths passing through that point. The resulting velocity field depends only on the current state and time, thereby defining a well-posed ODE that can be simulated in a forward manner.

The trajectory $Z_t$ generated under this velocity field enjoys an important marginal preservation property: for any $t \in [0,1]$, the random variable $Z_t$ follows the same distribution as the interpolation $X_t$. Thus, the terminal state satisfies $Z_1 \sim \pi_1$. Moreover, due to the uniqueness of ODE solutions, different trajectories do not intersect during evolution. This non-crossing property consolidates potentially entangled linear interpolation paths into a coherent deterministic flow, giving rise to the rectified flow~\cite{flow_matching}.

\subsection{Score Distillation}
Score distillation is inspired by VSD~\cite{prolificdreamer}, which aims to directly distill a multi-step teacher diffusion model into a few-step or single-step student model. Specifically, let $q$ be the ground truth distribution approximated by the pre-trained teacher model $u^Q$, $p_1$ be the generative distribution of the student model $G_\theta$, where $\theta$ is the trainable parameter. Score distillation minimizes the KL divergence between $q$ and $p_1$~\cite{prolificdreamer,DMD2}
\begin{equation}
\label{eq:KL}
    \min _{\theta} D_{K L}\left(p_1\left(X_1\right) \| q\left(X_1\right)\right)
\end{equation}
Here $X_1$ is the sample generated by the student model $G_\theta$ with few- or single-step generation. However, directly solving the problem in~\cref{eq:KL} is difficult. According to Theorem 1 in~\cite{prolificdreamer}, the optimization objective in~\cref{eq:KL} can be expanded to different timesteps $t$
\begin{equation}
\label{eq:KL_t}
    \min _{\theta} D_{K L}\left(p_t\left(X_t\right) \| q_t\left(X_t\right)\right)
\end{equation}
Here $t \in [0,1]$ is the noise level, and $X_t$ is the noisy version of the generated $X_1$. Thus, by expanding the KL divergence, the gradient of the student model is approximated as
\begin{equation}
\label{eq:gradient_SD}
\begin{aligned}
    & \nabla_\theta D_{K L}\left(p_t\left(X_t\right) \| q_t\left(X_t\right)\right) \\ & = 
    \mathbb{E}_{t,\boldsymbol{\epsilon}} \left[ \nabla_{X_t}\log p_{t}\left(X_t\right) - \nabla_{X_t} \log q_{t}\left(X_t\right)\right] \frac{\partial X_t}{\partial \theta},
\end{aligned}
\end{equation}
here $\boldsymbol{\epsilon} \sim \mathcal{N}[0,I]$ is the random noise. Because the scores cannot be obtained directly, the teacher score $\nabla_{X_t} \log q_{t}\left(X_t\right)$ is approximated by a pre-trained diffusion model $s_\theta$ and the student score $\nabla_{X_t} \log p_{t}\left(X_t\right)$ by an auxiliary diffusion model $s_\phi$. Then, with the simplification in~\cite{prolificdreamer,DDPM}, the gradient of the student model is estimated by
\begin{equation}
\label{eq:G_loss}
\begin{aligned}
    & \nabla_\theta D_{K L}\left(p_t\left(X_t\right) \| q_t\left(X_t\right)\right) \\ & \approx \mathbb{E}_{t, \boldsymbol{\epsilon}} \left[\|s_{\phi}\left(X_t, t\right)- s_{\theta}\left(X_t, t\right)\|\frac{\partial X_t}{\partial \theta}\right]
\end{aligned}
\end{equation}

The pre-trained diffusion model $s_\theta$ is the teacher model, and the auxiliary diffusion model $s_\phi$ is independently trained with the denoising loss on generated samples $X_1$~\cite{scoresde}
\begin{equation}
    \min _{\phi} \mathbb{E}_{t, \boldsymbol{\epsilon}}\left\|s_{\phi}\left(X_{t}, t\right)-\frac{X_1-X_{t}}{\sigma_{t}^{2}}\right\|_{2}^{2}
\end{equation}

During training, the student model and the auxiliary model are optimized alternately, with the optimization schedule configurable according to the specific task requirements.


Recent works apply score distillation to flow matching models~\cite{InstaFlow, senseflow}. Specifically, they convert the velocity field $v(X_t, t)$ to a score function via $s(X_t, t) \approx \frac{X_t - v(X_t, t)}{(1-t)^2}$~\cite{InstaFlow}. Such conversion is suboptimal for two reasons. First, division by $(1-t)^2$ amplifies velocity-estimation errors as $t$ approaches 1, directly increasing gradient variance. Second, the converted score is matched at independently sampled timesteps, discarding the deterministic ODE trajectory structure that FM models explicitly learn. MFD therefore operates directly in velocity space and integrates supervision along flow trajectories.

\section{Mean Flow Distillation}

Existing distillation methods for flow matching models typically convert the velocity field into an equivalent score function and then apply score-based methods developed for diffusion models~\cite{flow_socre_dis,senseflow}. While this inherits score matching, it overlooks the intrinsic structure of flow matching (the deterministic ODE trajectories and explicit velocity field formulation). In flow matching, a velocity field $v(x, t)$ uniquely defines a deterministic transport path from a source distribution to a target distribution, implying that \textbf{two flows with identical velocity fields generate identical trajectories and distributions}. This observation motivates an intuitive distillation paradigm that directly aligns velocity fields, rather than relying on an indirect score transformation.

We formalize this idea via Instantaneous Velocity Field Matching theorem (\cref{the:ivfm}), which provides a theoretical basis for direct velocity-based distillation. However, instantaneous matching uses supervision from a single sampled timestep. Empirically, VSD-style instantaneous baselines show wider loss distributions and more oscillatory training curves (\cref{fig:loss_violin,SM:variance_t2i}); theoretically, their gradient variance is tied to the noise of a single instantaneous estimate (\cref{SM:low_var}). To address this, inspired by mean flow theory~\cite{MF}, we propose Mean Flow Distillation (\cref{the:afd}), which replaces pointwise velocity alignment with matching time-integrated velocity fields, yielding stable and effective distillation.

\subsection{Problem Setup}

Let $p_0$ be the source distribution, set as $\mathcal{N}(0, I)$ by default in a space $\mathbb{R}^d$, and $Q$ be the target distribution of the training samples. We consider three key entities:

The \textbf{teacher model} is a pretrained flow matching model $u^Q: \mathbb{R}^d \times [0,1] \to \mathbb{R}^d$, which defines a probability flow transporting samples from $p_0$ to $Q$. The corresponding flow map is denoted as $\psi_t^Q$, specified by the time-dependent instantaneous velocity field of the pretrained distribution $Q$. 

The \textbf{student model} $G_\theta: \mathbb{R}^d \to \mathbb{R}^d$ is a one-step generator with trainable parameters denoted as $\theta$. $G_\theta$ maps a latent noise $X_0 \sim p_0$ to a generated sample $X_1 = G_\theta(X_0)$. The marginal distribution of generated samples is defined as $P$. Both $P$ and $Q$ are defined on $\mathbb{R}^d$ and share the same source $p_0$. Intuitively, the training objective is to optimize $\theta$ such that $P$ closely matches the target $Q$. 

Since the student model is a single-step mapping and does not explicitly provide probability flow information, an \textbf{auxiliary flow model} $v^P: \mathbb{R}^d \times [0,1] \to \mathbb{R}^d$ is introduced to approximate the student distribution $P$. Without loss of generality, we assume that $v^P$ can exactly reconstruct the instantaneous velocity field of the distribution $P$, with the corresponding flow map $\psi_t^P$ satisfying $\psi_1^P(X_0) \sim P$. The use of an auxiliary flow model is both necessary and theoretically justified; see~\cref{SM:auxiliary_necessity} for the proof and~\cref{sec:training_cost} for its capacity and overhead in practice. The overview of Mean Flow Distillation is shown in~\cref{fig:banner}.

\subsection{Instantaneous Velocity Field Matching}

Building on the trajectory uniqueness property of flow matching, we first establish a theoretical foundation for flow-based distillation through instantaneous velocity field alignment. \Cref{the:ivfm} shows that matching velocity fields pointwise is sufficient to ensure distribution matching.

\begin{theorem}[Instantaneous Velocity Field Matching]
\label{the:ivfm}
Let $u^Q$ and $v^P$ be the instantaneous velocity fields corresponding to the teacher distribution $Q$ and the student distribution $P$, respectively. Assume both velocity fields satisfy Lipschitz continuity. Let $D(\cdot, \cdot)$ denote a Bregman divergence on $\mathbb{R}^d$ satisfying strict positive definiteness ($D(x,y) = 0 \iff x=y$). If the expected Bregman divergence between the instantaneous velocity fields is zero:
\begin{equation}
\label{eq:ivfm_loss}
    \mathcal{L}_{IVF} = \mathbb{E}_{t \sim \mathcal{U}[0,1], X_t \sim p_t^P} \left[ D\left(v^P(X_t, t), u^Q(X_t, t)\right) \right] = 0.
\end{equation}
Then the student distribution coincides with the teacher distribution, i.e., $P = Q$.
\end{theorem}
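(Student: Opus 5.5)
The plan is to show that zero expected divergence makes the two velocity fields agree on the support of the student marginals. Then both marginal paths solve the same continuity equation from the same initial law $p_0$, and uniqueness of its solution for a Lipschitz field forces $P=Q$.

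\textbf{Step 1: pointwise agreement on the support.} The integrand $D(v^P(X_t,t),u^Q(X_t,t))$ is nonnegative, since a Bregman divergence of a convex generator is nonnegative. Its expectation over $t\sim\mathcal{U}[0,1]$ and $X_t\sim p_t^P$ vanishes. Hence it is zero for a.e.\ $t$ and $p_t^P$-a.e.\ $x$. By strict positive definiteness this gives
\begin{equation*}
v^P(x,t)=u^Q(x,t)\quad\text{for a.e. } t\in[0,1] \text{ and } p_t^P\text{-a.e. } x .
\end{equation*}
Both fields are continuous, being Lipschitz. So the equality extends to all $t$ and all $x\in\operatorname{supp}p_t^P$ by taking limits in $t$ and using density of the full-measure set in the support.

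\textbf{Step 2: a shared continuity equation.} By the marginal preservation property of rectified flow, the student path $p_t^P$ (the law of $\psi_t^P(X_0)$) solves $\partial_t p_t^P+\nabla\cdot(p_t^P v^P)=0$ weakly, with $p_0^P=p_0$. In the weak formulation the flux term $\int \nabla\varphi\cdot v^P\,dp_t^P$ only sees $v^P$ on the support of $p_t^P$. There $v^P=u^Q$ by Step 1, so $p_t^P$ also weakly solves $\partial_t\rho_t+\nabla\cdot(\rho_t u^Q)=0$ with $\rho_0=p_0$. The teacher path $p_t^Q=(\psi_t^Q)_\#p_0$ solves the same equation. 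Since $u^Q$ is Lipschitz, the superposition principle / uniqueness theorem for the continuity equation with Lipschitz velocity (e.g.\ Ambrosio--Gigli--Savaré) says the unique narrowly continuous solution is the pushforward under the flow. Therefore $p_t^P=(\psi_t^Q)_\#p_0=p_t^Q$ for all $t$, and at $t=1$ we get $P=Q$.

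\textbf{Main obstacle.} The delicate point is that equality holds only on the support of the student marginals, not on all of $\mathbb{R}^d$. So we cannot simply argue that $\psi_t^P=\psi_t^Q$ everywhere by ODE uniqueness.

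If one prefers a trajectory-level argument, I would proceed as follows. Fix $x_0$ in a full-measure set and let $Z_t=\psi_t^P(x_0)$. Because $Z_t\sim p_t^P$, Fubini applied to the vanishing expectation gives $v^P(Z_t,t)=u^Q(Z_t,t)$ for a.e.\ $t$. So $Z_t$ satisfies $\dot Z_t=u^Q(Z_t,t)$ for a.e.\ $t$. Both sides are continuous in $t$, hence this holds for all $t$. Picard--Lindelöf uniqueness for the Lipschitz field $u^Q$ then gives $Z_t=\psi_t^Q(x_0)$.

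This pathwise route avoids continuity-equation machinery. I would present it as the main argument, with the continuity-equation view as a remark. In either version, the key care is the interchange of the null set in $(t,x)$ with the trajectories, which is exactly what Fubini over the pair $(t,X_0)$ provides.
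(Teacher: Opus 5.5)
Your proposal is correct. The pathwise argument you designate as the main one is essentially the paper's proof: the vanishing loss gives $v^P=u^Q$ along the student marginals, so the student trajectories $\psi_t^P(X_0)$ solve the teacher ODE, and Picard--Lindel\"of yields $\psi_t^P=\psi_t^Q$, hence $P=(\psi_1^P)_\# p_0=(\psi_1^Q)_\# p_0=Q$. Your Fubini step over $(t,X_0)$ handles the null sets more carefully than the paper does: the paper passes straight from the a.e.\ statement to equality for all $t$ and all $x$ in the support by appealing to Lipschitz continuity, whereas your continuity-equation variant is valid but not needed.
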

The proof of~\cref{the:ivfm} is shown in~\cref{SM:proof_ivfm}.

While instantaneous velocity matching provides a theoretically sound foundation for distillation, it can be further improved in practice. Due to its reliance on pointwise velocity alignment at individual time steps, this approach exhibits sensitivity to local perturbations in $X_t$ and does not explicitly leverage the cumulative trajectory information over time intervals. These observations motivate us to develop a more robust alignment criterion that better captures the global structure of flows through temporal integration. As in~\cref{SM:low_var}, when reduced to single-step integration, our proposed method (MFD) degenerates to Instantaneous Velocity Field Matching with comparable variance and training stability to VSD, confirming that the benefits arise specifically from multi-step temporal averaging.

\subsection{Mean Velocity Field Matching}

Recent work on Mean Flows for one-step generative modeling~\cite{MF} demonstrates that supervising with time-integrated mean velocities rather than instantaneous velocities leads to better generation quality. We apply this idea to the distillation setting with a key insight: temporal integration acts as a low-pass filter, effectively averaging out high-frequency noise in instantaneous velocity estimates and providing more robust, low-variance gradients for student model optimization. 

Compared to pointwise instantaneous matching, matching time-integrated mean velocities offers stronger supervision signals that naturally capture global trajectory consistency rather than volatile local tangent alignment. While one intuitive perspective suggests that velocity errors may accumulate during integration over $[s, t]$ to amplify supervision signals, we discuss this viewpoint and its relationship to the variance reduction mechanism in~\cref{SM:error_accumulation}.

Consider the displacement of a particle along a flow trajectory from time $s$ to time $t$:
\begin{equation}
    \psi_t(X_0) - \psi_s(X_0) = \int_s^t v(\psi_\tau(X_0), \tau) \, d\tau.
\end{equation}
This displacement is determined by the cumulative effect of the velocity field over the interval $[s, t]$. We define Average Velocity Field (AVF) to capture this cumulation:

\begin{definition}[Average Velocity Field, AVF]
\label{def:avf}
For a velocity field $v$ with flow map $\psi_t$, the average velocity field $U(z_s, s, t)$ over the interval $[s, t]$ starting from position $z_s = \psi_s(X_0)$ is defined as:
\begin{equation}
\begin{aligned}
    U(z_s, s, t) &= \frac{1}{t-s} \int_s^t v(\psi_\tau(X_0), \tau) \, d\tau \\ &= \frac{\psi_t(X_0) - \psi_s(X_0)}{t - s}.
\end{aligned}
\end{equation}
Intuitively, $U(z_s, s, t)$ represents the mean velocity required to transport a particle from $z_s$ to $z_t = \psi_t(X_0)$ over $[s, t]$.
\end{definition}

AVF acts as a temporal low-pass filter on the instantaneous velocity field (detailed analysis in \cref{sec:discussion_signal}). By integrating over time, it filters out high-frequency stochastic noise and optimization artifacts inherent in gradient estimation (e.g., mini-batch sampling noise, numerical instabilities) and this provides a smoother supervision signal. This motivates our main theoretical result, which shows that matching average velocity fields is sufficient for distribution alignment.

Let $U^Q(z,s,t)$ and $U^P(z,s,t)$ be the average velocity fields corresponding to the distributions $Q$ and $P$, respectively. Taking $U^P$ as an example, it satisfies the following integral relationship for $0 \leq s < t \leq 1$:
\begin{equation}
    (t-s) U^{P}(z_{s}, s, t) = \int_{s}^{t} v^{P}\left(\psi_{\tau}^{P}\left((\psi_{s}^{P})^{-1}(z_{s})\right), \tau\right) d\tau.
\end{equation}

Intuitively, AVF describes the mean displacement vector of a particle moving from position $z_s$ to $z_t$ over the time interval $[s, t]$, i.e., $z_t = z_s + (t-s)U^P(z_s, s, t)$. 

\begin{theorem}[Mean Flow Matching]
\label{the:afd}
Let $U^Q(\cdot,s,t)$ and $U^P(\cdot,s,t)$ be the average velocity fields of $Q$ and $P$ from time $s$ to $t$.
Let $D(\cdot, \cdot)$ denote a Bregman divergence on $\mathbb{R}^d$ satisfying strict positive definiteness ($D(x,y) = 0 \iff x=y$). If the expected Bregman divergence between the AVF of the auxiliary flow $P$ and the teacher flow $Q$ is zero:
\begin{equation}
\label{eq:mfd_loss}
\begin{aligned}
    \mathcal{L}_{MFD} &= \mathbb{E}_{s, t \sim p(s, t), X_{s} \sim p_{s}^{P}} \\&\left[ D\left(U^{P}(X_{s}, s, t), U^{Q}(X_{s}, s, t)\right) \right] = 0,
\end{aligned}
\end{equation}
then the student distribution coincides with the teacher distribution, i.e., $P = Q$. 
\end{theorem}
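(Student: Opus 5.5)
The plan is to reduce \cref{the:afd} to \cref{the:ivfm} by recovering the instantaneous velocity fields from the average velocity fields in the limit $t \to s$. Since $D$ is strictly positive definite and nonnegative, $\mathcal{L}_{MFD}=0$ forces
\begin{equation*}
D\bigl(U^P(X_s,s,t),U^Q(X_s,s,t)\bigr)=0
\end{equation*}
for $p(s,t)$-almost every $(s,t)$ and $p_s^P$-almost every $X_s$. Hence $U^P(z,s,t)=U^Q(z,s,t)$ on a full-measure set. I will assume, as the statement implicitly requires, that $p(s,t)$ has positive density on a neighbourhood of the diagonal $\{s=t\}$, or more simply on all of $\{0\le s<t\le 1\}$. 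I will also assume both velocity fields are Lipschitz in $z$ and continuous in $\tau$, as in \cref{the:ivfm}. Under these assumptions, $U^P$ and $U^Q$ are continuous in $(z,s,t)$ for $s<t$. An a.e. equality therefore upgrades to equality for every $(s,t)$ and every $z \in \operatorname{supp} p_s^P$.

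Next I pass to the diagonal. By \cref{def:avf}, $(t-s)U^P(z,s,t)=\int_s^t v^P(\psi^P_\tau((\psi^P_s)^{-1}(z)),\tau)\,d\tau$. Continuity of the integrand in $\tau$ gives $\lim_{t\downarrow s}U^P(z,s,t)=v^P(z,s)$, and likewise $\lim_{t\downarrow s}U^Q(z,s,t)=u^Q(z,s)$. Here I use that $U^Q(z,s,\cdot)$ is evaluated along the teacher flow started from $z$ at time $s$, which is well defined for any $z$ because $u^Q$ is Lipschitz. Taking limits in $U^P=U^Q$ yields $v^P(z,s)=u^Q(z,s)$ for all $s\in[0,1)$ and $z\in\operatorname{supp}p_s^P$.

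This gives $D(v^P(X_s,s),u^Q(X_s,s))=0$ for $p_s^P$-a.e. $X_s$ and a.e. $s$. So $\mathcal{L}_{IVF}=0$, and \cref{the:ivfm} yields $P=Q$. If one prefers not to pass through \cref{the:ivfm}, the same conclusion follows directly. Take the full-interval case $s=0$, $t=1$, assuming it lies in the support of $p(s,t)$. Since $\psi_0^P=\psi_0^Q=\mathrm{id}$ with common source $p_0$, we get $\psi_1^P(z)=z+U^P(z,0,1)=z+U^Q(z,0,1)=\psi_1^Q(z)$ for $p_0$-a.e. $z$. The pushforwards of $p_0$ then coincide, giving $P=Q$. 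I would present this as a remark showing the theorem holds whenever the endpoint pair $(0,1)$ is charged.

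The main obstacle is a subtle support issue. The equality holds only on $\operatorname{supp}p_s^P$, and $U^Q$ is evaluated at student-distributed points rather than along teacher trajectories. One must check that \cref{the:ivfm}, as stated, likewise only needs agreement on the student marginals. It does, since its expectation is over $X_t\sim p_t^P$. The argument then closes by uniqueness of ODE solutions: the student flow is also a solution of the teacher ODE along its own trajectories. Consequently, both flows transport $p_0$ identically, through a continuity-equation uniqueness argument under the Lipschitz assumption. I would state the continuity and positivity-near-diagonal conditions on $p(s,t)$ explicitly as hypotheses. These are needed for the limiting step.
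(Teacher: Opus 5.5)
Your proposal is correct and follows essentially the paper's route: the paper argues by contradiction that $v^P\neq u^Q$ on a neighbourhood forces $U^P\neq U^Q$ for $t$ close to $s$, hence $\mathcal{L}_{MFD}>0$. That is the contrapositive of your $t\downarrow s$ limit, and both arguments then invoke \cref{the:ivfm}. Your version additionally makes the needed regularity explicit (continuity in time, full support of $p(s,t)$ near the diagonal), and your $(s,t)=(0,1)$ endpoint remark is a valid shortcut that the paper does not use.
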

Here, $p(s,t)$ denotes a distribution with full support on the time domain, and the velocity fields are assumed to satisfy Lipschitz continuity. The proof of \cref{the:afd} is provided in \cref{SM:proof1}. As the interval $[s, t]$ shrinks to a single point ($s \to t$), the average velocity field converges to the instantaneous. Thus, matching AVFs over intervals with full temporal support implies matching instantaneous velocities everywhere, which by \cref{the:ivfm} guarantees $P = Q$.




Compared to instantaneous velocity matching, Mean Flow Matching offers significant practical advantages. (1) The temporal integration averages out noise in instantaneous velocity estimates, reducing gradient variance (\cref{SM:low_var}). (2) AVF captures cumulative trajectory behavior for global consistency. We visualize MFD's smooth velocity field in a controlled 2D toy example (\cref{SM:toy_exp}).


\subsection{Practical Implementation}

In practice, the $\ell_2$ norm is used as the Bregman divergence. Since both the auxiliary model $v^P$ and the teacher model $u^Q$ are instantaneous velocity fields, the AVF must be approximated by numerical integration over the interval $[s, t]$.

Let $\Phi(\cdot)$ denote a numerical ODE solver. For any instantaneous velocity field $v$ (either $v^P$ or $u^Q$), the estimated average velocity $\hat{U}$ over $[s, t]$ is computed as follows:

First, starting from state $X_s$ at time $s$, we integrate forward to time $t$ to obtain the reference point $X'_t$:
\begin{equation}
    X_{t}^{\prime} = \Phi(v, X_{s}, t, s) \approx X_{s} + \int_{s}^{t} v(X_{\tau}, \tau) d\tau.
\end{equation}
Then, the average velocity vector is computed based on the resulting displacement:
\begin{equation}
\label{eq:displacement}
    \hat{U}(X_{s}, s, t ; v) = \frac{X_{t}^{\prime} - X_{s}}{t - s}.
\end{equation}
Thus, we obtain the auxiliary model's average velocity $\hat{U}^{P} = \hat{U}(X_{s}, s, t ; v^P)$ and the teacher model's average velocity $\hat{U}^{Q} = \hat{U}(X_{s}, s, t ; u^Q)$.

The goal of MFD is to optimize the student model $G_\theta$ such that the average velocity estimated by the auxiliary model approximates the teacher’s average velocity:
\begin{equation}
\begin{aligned}
    &\mathcal{L}_\text{MFD}(\theta) = \frac{1}{2} \mathbb{E}_{s, t, X_{0}} \\ & \left[ \left\| \hat{U}^{P}(X_{s}(\theta), s, t ; v^P) - \hat{U}^{Q}(X_{s}(\theta), s, t ; u^{Q}) \right\|_{2}^{2} \right],
\end{aligned}
\end{equation}
where $X_s(\theta) = s \cdot G_\theta(X_0) + (1-s) \cdot X'_0$ and $X'_0 \sim p_0$.

Let $\Delta(X_s) = \hat{U}^{P}(X_{s}(\theta), s, t ; v^P) - \hat{U}^{Q}(X_{s}(\theta), s, t ; u^{Q})$ be the difference between the two estimated mean velocities. According to the chain rule, the gradient of the student model is:
\begin{equation}
\label{eq:gradient}
    \nabla_{\theta} \mathcal{L}_\text{MFD} = \mathbb{E} \left[ (\nabla_{\theta} X_{s})^\top \cdot \nabla_{X_{s}} \Delta(X_{s}) \cdot \Delta(X_{s}) \right]
\end{equation}
The term $\nabla_{X_{s}} \Delta(X_{s})$ involves differentiating the output of the ODE solver $\Phi$ with respect to the input $X_s$. This requires backpropagation through the solver, which is computationally expensive and potentially unstable. Following a simplification similar to Score Distillation Sampling (SDS)~\cite{sds}, we treat the estimated average velocity fields $\hat{U}^P$ and $\hat{U}^Q$ as fixed guidance targets at the current state $X_s$. By omitting the Jacobian term (namely approximating $\nabla_{X_s} \Delta(X_s) \approx I$ effectively for the direction), the gradient in \cref{eq:gradient} is simplified to:
\begin{equation}
\label{eq:gradient_simplify}
    \nabla_{\theta} \mathcal{L}_\text{MFD} \approx \mathbb{E} \left[ (\nabla_{\theta} X_{s})^\top \cdot \Delta(X_{s}) \right]
\end{equation}
Substituting $\nabla_{\theta}X_s = s \cdot \nabla_\theta G_\theta(X_0)$ into \cref{eq:gradient_simplify}, the final gradient for the student model is:
\begin{equation}
\label{eq:final_gradient}
\begin{aligned}
    \nabla_{\theta} & \mathcal{L}_\text{MFD} \approx \mathbb{E}_{s, t, X_{0}} \Bigg[ s \cdot  (\nabla_{\theta} G_{\theta}(X_{0}))^\top \cdot \\ & \bigg(  \underbrace{\frac{\Phi(v^P, X_{s}, t, s) - X_{s}}{t-s}}_{\text{Auxiliary Mean Flow}} 
    -  \underbrace{\frac{\Phi(u^{Q}, X_{s}, t, s) - X_{s}}{t-s}}_{\text{Teacher Mean Flow}} \bigg) \Bigg]
\end{aligned}
\end{equation}
Using \cref{eq:final_gradient}, the student model can be optimized such that its single-step generation distribution closely matches the teacher's multi-step generation distribution. For more discussion on the rationality and theoretical scope of omitting the Jacobian term, please refer to~\cref{SM:jacobian_term}. The complete training algorithm of Mean Flow Distillation is summarized in~\cref{alg:mfd}. Following the TTUR strategy~\cite{TTUR} and existing distribution distillation methods~\cite{Diff-Instruct,DMD2}, we set the alternating update ratio between the auxiliary model and the student model to 10:1, ensuring that the auxiliary model can accurately track the evolving student distribution before each student update. Measured wall-clock and memory costs are reported in~\cref{sec:training_cost}.

\begin{algorithm}[tb]
  \caption{Mean Flow Distillation (MFD)}
  \label{alg:mfd}
  \begin{algorithmic}
    \STATE {\bfseries Input:} Pretrained teacher $u^Q$, student $G_\theta$, auxiliary flow model $v_\phi$, ODE solver $\Phi$, iterations $N$, TTUR ratio $K=10$
    \STATE {\bfseries Output:} Trained student model $G_\theta$
    \FOR{$n=1$ {\bfseries to} $N$}
    \STATE \textit{// Update auxiliary flow model ($K$ steps)}
    \FOR{$k=1$ {\bfseries to} $K$}
    \STATE Sample $X_0 \sim \mathcal{N}(0, I)$
    \STATE {\bfseries with} no grad:
    \STATE \quad $X_1 = G_\theta(X_0)$ \qquad // Student sample
    \STATE Resample $X_0 \sim \mathcal{N}(0, I)$; Sample $\tau \sim \mathcal{U}[0, 1]$
    \STATE $X_\tau = (1-\tau) X_0 + \tau X_1$
    \STATE Update $\phi$: $\mathcal{L}_{FM} = \| v_\phi(X_\tau, \tau) - (X_1 - X_0) \|_2^2$
    \ENDFOR
    \STATE \textit{// Update student model (1 step)}
    \STATE Sample $X_0 \sim \mathcal{N}(0, I)$; $X_1 = G_\theta(X_0)$
    \STATE Resample $X_0 \sim \mathcal{N}(0, I)$
    \STATE Sample $s, t \sim p(s, t)$ with $0 \leq s < t \leq 1$
    \STATE $X_s = (1-s) X_0 + s X_1$
    \STATE {\bfseries with} no grad:
    \STATE \quad $\hat{U}^Q = (\Phi(u^Q, X_s, t, s) - X_s) / (t - s)$
    \STATE \quad $\hat{U}^P = (\Phi(v_\phi, X_s, t, s) - X_s) / (t - s)$
    \STATE Update $\theta$: $\mathcal{L}_{MFD} = ( \hat{U}^P - \hat{U}^Q ) \cdot X_s$
    \ENDFOR
    \STATE {\bfseries return} $G_\theta$
  \end{algorithmic}
\end{algorithm}

\section{Experiments}
In this section, we conduct a series of experiments across two challenging scenarios, 4D occupancy forecasting for autonomous driving (\cref{sec:4D}) and text-to-image generation (\cref{sec:t23D}), to evaluate the effectiveness of MFD. Detailed hyperparameters and training configurations are provided in \cref{SM:imp}.

\subsection{4D Occupancy Forecasting}
\label{sec:4D}
4D occupancy forecasting is a critical perception task in autonomous driving that predicts future occupancy states of the surrounding environment in a spatiotemporal volume~\cite{OccFM}. This task requires modeling complex scene dynamics, making it an ideal testbed for evaluating generative model distillation under real-world constraints.

We conduct experiments on OccFM~\cite{OccFM} for LiDAR-based 4D occupancy prediction, initializing the teacher, auxiliary, and student models from the same pretrained checkpoint. Experiments are conducted on the nuScenes dataset~\cite{nuscenes}, with performance measured by IoU and mean IoU (mIoU)~\cite{iou}. We compare against advanced baselines, including Consistency Models~\cite{consistency_model}, Align Your Flow (AYF)~\cite{alignYourFlow}, Diff-Instruct~\cite{Diff-Instruct} (VSD-based), and SOTA distillation methods such as DMD2~\cite{DMD2} and SenseFlow~\cite{senseflow}.

\begin{table}[t]
  \caption{Comparison of MFD against the teacher model and baseline distillation methods on 4D occupancy forecasting. FPS is calculated on a single RTX 3090.}
  \label{tab:4D}
  \resizebox{\columnwidth}{!}{%
  \centering
        \begin{tabular}{lrrrr}
          \toprule
          Model  & NFE~$\downarrow$ & IoU~$\uparrow$~(\%) & mIoU~$\uparrow$~(\%) & FPS~$\uparrow$  \\
          \midrule
          OccFM (Teacher)    & 10 & 37.52 & 28.49 & 12.72  \\
          OccFM (Teacher)    & 5 & 36.05 & 25.18 & 16.75  \\
          OccFM (Teacher)    & 2 & 29.02 & 13.72 & 21.33 \\
          OccFM (Teacher)    & 1 & 27.42 & 8.15 & 24.92  \\
          \midrule
          Diff-Instruct & 1 & 36.41 & 26.42 & \textbf{25.21} \\
          CD    & 1 & 31.69 & 20.70 & 25.08  \\
          AYF~\cite{alignYourFlow} & 1 & 32.38 & 21.69 & {--} \\
          DMD2 & 1 & 35.84 & 24.27 & 25.11 \\
          SenseFlow & 1 & 35.40 & 23.91 & 25.09 \\
          \textbf{MFD (Ours)}    & 1 & \textbf{37.07} & \textbf{27.25} & 25.19\\
          \bottomrule
        \end{tabular}
  }
  \vskip -0.1in
\end{table}

\cref{tab:4D} presents a comprehensive comparison of MFD against the teacher model and baseline distillation methods. 
Compared with the teacher model with 10-steps sampling, the proposed MFD achieves remarkable distillation quality, losing only \textbf{1.2\%} of the teacher's IoU (37.07\% vs. 37.52\%) and \textbf{4.4\%} of its mIoU (27.25\% vs. 28.49\%) while increasing the sampling speed \textbf{tenfold} (
reducing NFE from 10 to 1) and increasing the FPS from 12.72 to 25.21. Moreover, compared to teacher models with fewer sampling steps, MFD achieves higher generation quality.

Compared with other distillation methods, MFD achieves the optimal performance across all metrics. 
These performance gaps highlight fundamental limitations of existing methods. VSD suffers from high-variance gradients due to its reliance on single-timestep velocity estimates, which are noisy in high-dimensional latent spaces and fail to capture the global trajectory structure of flow matching. DMD2 and SenseFlow introduce adversarial discriminators, leading to training instability and strong hyperparameter sensitivity, especially when adapting to flow matching models with different loss landscapes. Moreover, both still depend on instantaneous score matching, which struggles to model long-horizon flow dynamics. Consistency-based methods, including CD and AYF, impose sample-level self-consistency constraints that prove overly restrictive for complex multi-modal data; MFD outperforms AYF by 4.69 IoU and 5.56 mIoU. In contrast, MFD matches time-integrated mean flows at the distribution level, yielding low-variance gradients that preserve generation quality while ensuring stable training. All measured single-step methods achieve similar inference speeds ($\approx$25 FPS), indicating that although MFD introduces an auxiliary flow model $v_\phi$ during training, it incurs negligible inference overhead. Qualitative visualizations are provided in~\cref{fig:4D_vis} in the appendix.

\begin{table}[t]
  \caption{Ablation on ODE solver step size for MFD.}
  \label{tab:4D_ablation}
  \centering
        \begin{tabular}{lrr}
          \toprule
          ODE Solver Step Size  &  IoU~$\uparrow$~(\%) & mIoU~$\uparrow$~(\%) \\
          \midrule
          0.001   & 36.28 & 25.91 \\
          0.005 & 37.12 & 27.14   \\
          0.02    & \textbf{37.07} & \textbf{27.25}  \\
          0.1    & 36.82 & 26.47   \\
          0.5    & 36.74 & 26.54  \\
          \bottomrule
        \end{tabular}
  \vskip -0.1in
\end{table}

\begin{table*}[t]
    \caption{Comparison with baselines on text-to-image generation. FID is computed with DINOv2 features~\cite{dinov2} against 5,000 teacher-generated reference samples.}
  \label{tab:t2i}
  \centering
    \resizebox{\textwidth}{!}{%
                \begin{tabular}{lrrrrrrr}
          \toprule
                    Model  & NFE~$\downarrow$ & Aesthetic Score~$\uparrow$ & PickScore~$\uparrow$ & HPSv2~$\uparrow$ & ImageReward~$\uparrow$ & CLIP Score~$\uparrow$ & FID(DINOv2)~$\downarrow$  \\
          \midrule
                    SANA (Teacher)    & 20 & 6.335 & 0.2021 & 0.2417 & 0.8363 & 30.717 & {--}   \\
                    SANA (Teacher)    & 10 &  6.297 &  0.2009 & 0.2388 &  0.7788 & 30.508 & {--}  \\
                    SANA (Teacher)    & 4 & 5.976 &   0.1872 &  0.2173 &   -0.1008 &  28.251 & {--} \\
          \midrule
                    AYF~\cite{alignYourFlow} & 4 & 6.231 & 0.1963 & 0.2298 & 0.7022 & 29.176 & 43.68 \\
                    Diff-Instruct & 4 & 6.453 & 0.1985 & 0.2434 & 0.7677 & 29.393 & 37.55 \\
                    CD    & 4 & 6.242 & 0.1918 & 0.2267 & 0.6734 & 28.925 & 45.97  \\
                    DMD2 & 4 & 6.495 & \textbf{0.2004} & 0.2439 & 0.8017 & 29.240 & 36.13 \\
                    SenseFlow & 4 & 6.482 & 0.1998 & \textbf{0.2445} & 0.7925 & 29.321 & 35.97 \\
                    \textbf{MFD (Ours)}    & 4 &  \textbf{6.541} & 0.2002 & 0.2442 & \textbf{0.8483}  & \textbf{29.430} & \textbf{33.28}  \\
          \bottomrule
        \end{tabular}
    }
  \vskip -0.1in
\end{table*}

To investigate the sensitivity of MFD to the numerical accuracy of mean flow computation, we vary the ODE solver step size in $\Phi(v, X_s, t, s)$. Smaller steps reduce discretization error but increase training cost. Results in \cref{tab:4D_ablation} reveal an interesting non-monotonic trend. Very small steps (0.001) hurt performance due to accumulated numerical errors. Moderate steps (0.005–0.02) perform best, with 0.02 achieving the highest mIoU, indicating that a coarser integration provides more robust supervision by averaging noisy velocities. Large steps (0.1–0.5) degrade performance, as too few integration steps fail to capture trajectory curvature. As $\Delta \tau$ approaches $(t-s)$, MFD collapses to a single-step, VSD-like form, losing the variance-reduction benefit of temporal integration. In practice we adopt $0.02$.

\subsection{Text-to-Image Generation}
\label{sec:t23D}
Text-to-image generation is a fundamental task in generative modeling that synthesizes images from natural language descriptions. Its high-dimensional output space makes it an ideal benchmark for evaluating distillation methods.

We conduct experiments on SANA 1.6B~\cite{sana}, a state-of-the-art flow matching model for text-to-image generation, where the teacher model, auxiliary flow model, and student generator are all initialized from the pretrained SANA checkpoint. We train and evaluate on the LAION-aesthetic-6.5+~\cite{laion} dataset using five complementary metrics: Aesthetic Score~\cite{laion} for visual appeal, PickScore~\cite{pickscore} for human preference prediction, HPSv2~\cite{hpsv2} for aesthetic alignment, ImageReward~\cite{imagereward} for overall quality and text-image alignment, and CLIP Score~\cite{clip} for semantic alignment. We additionally report FID with DINOv2 features~\cite{dinov2} against teacher-generated reference samples to measure distributional proximity. We compare MFD against five representative distillation methods: Diff-Instruct~\cite{Diff-Instruct}, Consistency Distillation~\cite{consistency_model}, AYF~\cite{alignYourFlow}, DMD2~\cite{DMD2}, and SenseFlow~\cite{senseflow}.

\Cref{tab:t2i} presents a comprehensive comparison on text-to-image generation. MFD achieves the best Aesthetic Score, ImageReward, CLIP Score, and FID(DINOv2), while matching the strongest baselines on PickScore and HPSv2 within a small margin. Compared with AYF, MFD improves all reported metrics and reduces FID from 43.68 to 33.28, indicating that distribution-level mean-flow alignment provides stronger few-step distillation than sample-level consistency constraints. DMD2 and SenseFlow achieve slightly higher PickScore or HPSv2, but they incorporate additional components such as perceptual losses or adversarial training (GAN)~\cite{GAN}. In contrast, MFD focuses purely on distribution matching through mean flow alignment as a foundational distillation approach, and can be combined with perceptual losses~\cite{johnson2016perceptual} or adversarial objectives in future work. Coverage-oriented CIFAR-10 precision/recall results are provided in~\cref{SM:cifar10}.

\begin{figure}[t]
  \begin{center}
    \includegraphics[width=\columnwidth]{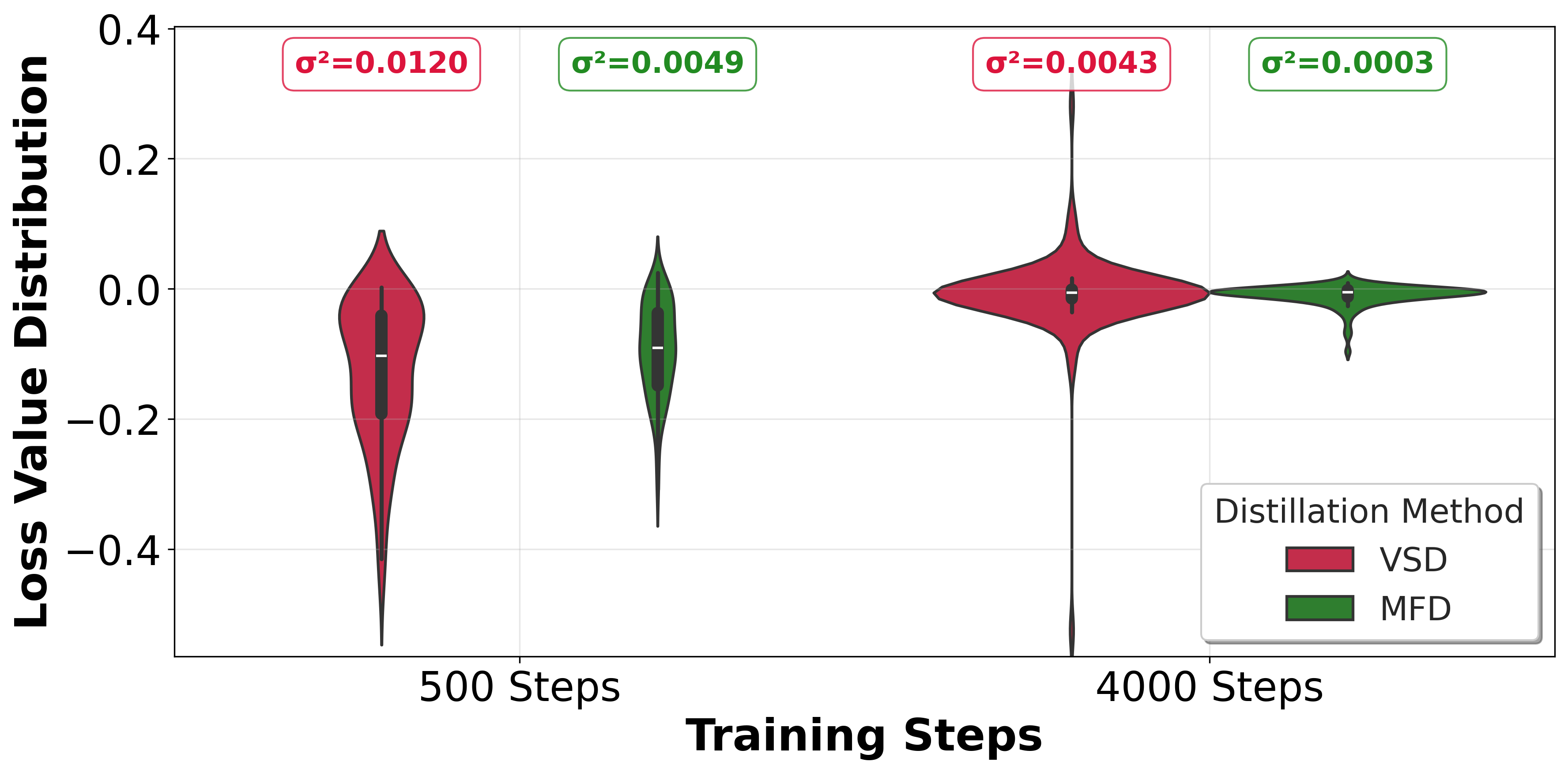}
    \caption{Loss distribution between MFD and Diff-Instruct on text-to-image generation at training step 500 and 4000. MFD consistently exhibits a narrower distribution with lower variance, empirically confirming the variance reduction property proven in~\cref{SM:low_var}.}
    \label{fig:loss_violin}
  \end{center}
  \vskip -0.15in
\end{figure}

To empirically validate our theoretical claim that MFD exhibits lower gradient variance than VSD (\cref{SM:low_var}), we conduct a variance analysis experiment on the text-to-image generation task. We perform snapshot analysis at training steps 500 and 4000, measuring student loss distributions by freezing student model parameters and sampling diverse batches. The loss is defined as $\mathbb{E}_{s, t, X_{0}\sim\mathbb{B}} \left[ X_{s}^\top \cdot \Delta(X_{s}) \right]$ following Eq (\ref{eq:gradient_simplify}) and it converges to zero as the training proceeds.
\Cref{fig:loss_violin} presents violin plots comparing the loss distributions of MFD and Diff-Instruct at these two representative training stages. At both early (step 500) and late (step 4000) training stages, MFD exhibits consistently narrower and more concentrated loss distributions, empirically confirming the variance reduction property. Additionally, we compare the training loss trajectories shown in~\cref{fig:loss_curve} (\cref{SM:variance_t2i}), where MFD demonstrates significantly smoother convergence with substantially reduced loss fluctuations throughout the entire training process, further validating that the temporal integration in mean flow matching effectively suppresses optimization noise.

Similarly, we investigate the sensitivity of MFD to the numerical accuracy of mean flow computation in the text-to-image generation setting. \Cref{tab:ablation_t2i} reports the performance of MFD under different ODE solver step sizes, which control the discretization granularity used in mean flow estimation. The ablation study on ODE solver step size reveals consistent trends with the 4D occupancy forecasting experiments. Excessively small step sizes result in degraded performance, likely due to the accumulation of numerical errors over a large number of integration steps. Due to the length limitation, the detailed analysis is shown in~\cref{SM:ablation_t2i}. Empirically, we recommend the ODE solver step size be set to $0.02$ or $0.03$. Additionally, we present qualitative comparisons of generated images in~\cref{fig:t2i_vis}, demonstrating the visual quality of MFD-distilled models with 4-step inference.

\subsection{Training Cost and Practical Overhead}
\label{sec:training_cost}

The auxiliary flow model $v_\phi$ is discarded at inference, so MFD incurs no extra inference cost (cf.\ \cref{tab:4D}). Implemented with LoRA (rank 64) on SANA 1.6B, it adds only $\sim$34M trainable parameters ($<$2.2\% of the backbone). \Cref{tab:training_cost} compares wall-clock cost and peak GPU memory: MFD ($\sim$25 GPU hours, $\sim$20\,GB on one RTX PRO 6000) is comparable to VSD while using substantially less memory than DMD2, which additionally trains a discriminator. The 10:1 update ratio does not entail a 10-fold cost increase, because the dominant computation is the mean-flow ODE integration during student updates, while the auxiliary model is updated via lightweight adapters. CD is cheaper but at the expense of clearly inferior quality. The training overhead is a one-time distillation cost with zero impact on deployment.

\begin{table}[t]
    \caption{Training cost comparison for text-to-image distillation on SANA 1.6B. All measurements use one NVIDIA RTX PRO 6000 GPU.}
    \label{tab:training_cost}
    \centering
    \begin{tabular}{lrr}
        \toprule
        Method & GPU hours & Peak memory (GB) \\
        \midrule
        Diff-Instruct (VSD) & $\sim$22 & $\sim$20 \\
        CD & $\sim$4 & $\sim$15 \\
        DMD2 & $\sim$30 & $\sim$50 \\
        MFD (ours) & $\sim$25 & $\sim$20 \\
        \bottomrule
    \end{tabular}
\end{table}

\section{Conclusion}

In this paper, we presented Mean Flow Distillation (MFD), a robust framework that addresses the high gradient variance inherent in score-based distillation by aligning time-integrated average velocity fields. Theoretically grounded in our Mean Flow Matching Theorem, MFD functions as a temporal low-pass filter, suppressing optimization noise to ensure global trajectory consistency. Extensive experiments on 4D occupancy forecasting and text-to-image generation demonstrate that MFD consistently outperforms state-of-the-art baselines, achieving high-fidelity single-step generation with superior training stability and distributional fidelity. While our current approach relies on an auxiliary flow model during training, it adds no inference overhead; future work will further reduce its training cost and broaden the framework to additional flow matching domains.

\section*{Acknowledgements}

This paper is funded by Provincial Key Research and Development Plan of Zhejiang Province under No. 2024C01250(SD2), National Natural Science Foundation of China under No. 62576306 and the Fundamental Research Funds for the Zhejiang Provincial Universities under No. 226-2025-00206.
We thank anonymous reviewers for their insightful comments and constructive suggestions, which significantly improved the quality of this work.

\section*{Impact Statement}

This paper presents work whose goal is to advance the field of Machine Learning, specifically in accelerating generative models through flow matching distillation. The primary societal impact of our work lies in democratizing access to high-quality generative AI by significantly reducing computational costs. By enabling single-step generation with minimal quality loss, MFD can make powerful text-to-image and 4D perception models more accessible to researchers and practitioners with limited computational resources, potentially fostering broader innovation in creative industries and autonomous systems.

However, we acknowledge potential dual-use concerns. Accelerated generative models could facilitate the rapid creation of synthetic media, which may be misused for generating misleading or harmful content. We encourage practitioners deploying MFD-distilled models to implement appropriate safeguards, including content filtering, watermarking, and provenance tracking mechanisms. In the context of autonomous driving applications, while our method improves efficiency, thorough safety validation and testing remain essential before real-world deployment. Overall, we believe the benefits of improved accessibility and efficiency outweigh the risks when accompanied by responsible deployment practices.

\bibliography{mean_flow_distillation}
\bibliographystyle{icml2026}

\newpage
\appendix
\onecolumn

\section{Experiment protocol}
\label{SM:exp}
\subsection{Dataset setup}
\label{SM:dataset}
nuScenes~\cite{nuscenes} is a large-scale multimodal perception dataset for autonomous driving, released by Motional (formerly nuTonomy), and represents the first public dataset to systematically provide a full-suite sensor configuration for autonomous vehicles. Collected in real-world urban driving scenarios, nuScenes comprises synchronized data from six surround-view cameras, one 32-beam LiDAR, five millimeter-wave radars, along with high-precision GPS and IMU measurements, enabling multi-view and multimodal modeling in complex traffic environments. Compared to earlier datasets such as KITTI, nuScenes offers substantially broader sensor coverage as well as significant improvements in data scale and annotation density. It contains over seven times more object-level annotations than KITTI and provides long-term temporal sequences with fine-grained 3D object labels, attributes, and motion states. Owing to its rich multimodal temporal data and high-quality 3D annotations, nuScenes has become a widely adopted benchmark for tasks such as 4D occupancy forecasting, temporal scene understanding, and world modeling.

LAION-aesthetic-6.5+~\cite{laion} is a high-quality subset of the LAION-5B dataset, specifically curated for training and evaluating text-to-image generation models. LAION (Large-scale Artificial Intelligence Open Network) is one of the largest publicly available image-text pair datasets, collected from the public web through Common Crawl. The aesthetic-6.5+ subset applies an aesthetic quality filter based on the LAION aesthetic predictor, retaining only images with predicted aesthetic scores of 6.5 or higher (on a scale from 1 to 10). This filtering process ensures that the dataset contains visually appealing, high-quality images with good composition, color harmony, and overall aesthetic appeal, making it particularly suitable for training generative models that aim to produce visually pleasing outputs. Each sample in LAION-aesthetic-6.5+ consists of an image paired with its corresponding alt-text caption scraped from the web, providing natural language descriptions that range from short phrases to detailed sentences. The dataset covers an extremely diverse range of visual concepts, including everyday objects, natural scenes, portraits, abstract art, architectural structures, and various artistic styles. This diversity in both visual content and textual descriptions enables text-to-image models to learn robust cross-modal alignment and generate high-fidelity images across a wide spectrum of prompts. Due to its large scale, high aesthetic quality, and rich semantic diversity, LAION-aesthetic-6.5+ has become a de facto standard benchmark for training and fine-tuning state-of-the-art text-to-image generation models, including Stable Diffusion, DALL-E variants, and flow-based generative models.


\subsection{Evaluation metrics}
\label{SM:metrics}
Intersection over Union (IoU)~\cite{iou} is a classical evaluation metric that measures the degree of overlap between model predictions and ground-truth annotations, and is widely used in tasks such as object detection, semantic segmentation, and occupancy prediction. It is defined as the ratio between the intersection and the union of the predicted region and the ground-truth region, taking values in the range $[0, 1]$, where higher values indicate better agreement with the ground truth. In 3D or 4D scene understanding tasks, IoU can be naturally extended to voxel-level or spatiotemporal volume-level IoU, enabling the evaluation of a model’s ability to capture spatial structures and their temporal evolution. As such, IoU serves as a fundamental metric for assessing prediction quality at the level of individual classes or samples.

Mean Intersection over Union (mIoU) aggregates IoU scores across multiple semantic classes or time steps to provide a holistic measure of overall model performance. Specifically, mIoU is typically computed by first evaluating IoU for each semantic category (or occupancy state) independently and then taking the arithmetic mean over all categories. This averaging strategy prevents the evaluation from being dominated by classes with a large number of samples. Compared to a single IoU score, mIoU more effectively reflects a model’s class-wise balance and robustness, and is therefore widely adopted as a core evaluation metric in semantic segmentation, occupancy grid prediction, and 4D occupancy forecasting tasks.

Aesthetic Score~\cite{laion} is an automated image quality assessment metric developed as part of the LAION project to predict the aesthetic appeal of generated images. It is trained to predict human aesthetic preferences on a scale from 1 to 10, with higher scores indicating more visually pleasing images in terms of composition, color harmony, lighting, and overall artistic quality. The predictor is trained on a large dataset of images with human-annotated aesthetic ratings, enabling it to serve as a proxy for human aesthetic judgment. This metric is particularly useful for evaluating text-to-image generation models, as it captures subjective visual quality aspects that go beyond simple semantic alignment or technical correctness.

PickScore~\cite{pickscore} is a human preference prediction metric specifically designed for text-to-image generation evaluation. Unlike traditional metrics that focus on individual image attributes, PickScore is trained to predict which image humans would prefer when presented with pairs of generated images for the same text prompt. The metric is based on a vision-language model fine-tuned on a large-scale dataset of human preference comparisons collected through crowdsourcing. PickScore provides a scalar value that correlates with human preference rankings, making it a reliable indicator of generation quality from a human-centered perspective. This metric is particularly valuable for assessing the overall appeal and naturalness of generated images beyond pixel-level or feature-level similarity measures.

HPSv2 (Human Preference Score v2)~\cite{hpsv2} is an advanced human preference prediction metric that represents the second generation of human preference scoring systems for text-to-image generation. Building upon the original HPS framework, HPSv2 incorporates improved training strategies and a more diverse preference dataset to better capture the nuanced aspects of human aesthetic judgment. The metric evaluates multiple dimensions of image quality including aesthetic appeal, prompt alignment, photorealism, and artistic coherence. HPSv2 demonstrates strong correlation with human evaluations across diverse generation scenarios and has become widely adopted as a standard benchmark for comparing text-to-image generation models.

ImageReward~\cite{imagereward} is a comprehensive reward function designed to holistically assess the quality of text-to-image generation by jointly considering multiple aspects including visual fidelity, semantic alignment with text prompts, aesthetic quality, and overall human preference. Unlike single-dimensional metrics, ImageReward is trained using reinforcement learning from human feedback (RLHF) on a large-scale dataset of human annotations that capture diverse quality aspects. The metric provides a unified score that balances technical correctness (e.g., object presence, spatial relationships) with subjective quality judgments (e.g., artistic appeal, naturalness). ImageReward has been shown to correlate strongly with comprehensive human evaluations and is particularly effective for optimizing text-to-image models through reward-based fine-tuning strategies.

CLIP Score~\cite{clip} measures the semantic alignment between generated images and their corresponding text prompts using the CLIP (Contrastive Language-Image Pre-training) model. CLIP is trained on hundreds of millions of image-text pairs from the internet to learn joint embeddings where semantically related images and texts are mapped to nearby points in a shared representation space. The CLIP Score is computed as the cosine similarity between the CLIP embeddings of a generated image and its text prompt, providing a quantitative measure of how well the image content matches the textual description. Higher CLIP Scores indicate stronger semantic correspondence, capturing whether the generated image contains the correct objects, attributes, actions, and relationships specified in the prompt. While CLIP Score primarily evaluates semantic fidelity rather than aesthetic quality or photorealism, it serves as a fundamental metric for assessing the text-image alignment capability of generation models and has become a standard evaluation tool in the text-to-image generation literature.

Fr\'echet Inception Distance (FID) measures the distance between feature distributions of generated and reference images by fitting Gaussian distributions to their extracted features. In our text-to-image experiments, we use DINOv2 features~\cite{dinov2} and compute FID against teacher-generated reference samples, which directly evaluates how closely the student preserves the teacher distribution. For CIFAR-10 coverage analysis, we additionally report precision and recall~\cite{precision_recall}: precision measures sample fidelity by estimating how much of the generated distribution lies on the data manifold, while recall measures coverage by estimating how much of the reference distribution is recovered by the generator.

\subsection{Implementation details}
\label{SM:imp}
\subsubsection{4D Occupancy Forecasting}
During the training, the variational autoencoder (VAE) used for latent space encoding is kept frozen with pretrained weights throughout training.  Except for the auxiliary model and student model, all other network parameters remain frozen during distillation.  The student model is configured as a single-step generator $G_\theta: \mathbb{R}^d \to \mathbb{R}^d$, directly mapping noise to the final occupancy prediction without iterative refinement.

We employ the Adam optimizer with a learning rate of $1 \times 10^{-5}$ for both the auxiliary flow model $v_\phi$ and the student model $G_\theta$, incorporating a linear warmup schedule over the first 100 iterations to stabilize early-stage training. Gradient clipping with a maximum norm of 1.0 is applied. Training is conducted on a single node equipped with 8 NVIDIA RTX 3090 GPUs using distributed data parallel (DDP). We set the batch size to 1 per GPU (effective batch size of 8), with mixed-precision training (FP16) enabled to accelerate computation and reduce memory consumption using a gradient rescaling factor of 10.0.

Following recent findings in diffusion model training, we adopt a logit-normal biased sampling strategy~\cite{SD3} for timesteps $t$ and $s$, which allocates more sampling density to intermediate timesteps where the velocity field exhibits higher variance. When computing the mean flow $\hat{U}(X_s, s, t)$, we employ an adaptive ODE solver with a maximum step size of 0.02 to ensure accurate numerical integration along the velocity field—our ablation study validates this choice as an optimal trade-off between integration accuracy and computational efficiency. During distillation, the teacher model $u^Q$ is evaluated with classifier-free guidance (CFG) enabled at a guidance scale of $w=2.0$ to enhance conditional generation quality and provide stronger supervision signals, while the auxiliary flow model $v_\phi$ is trained without CFG to directly model the student distribution $P$. We train for a total of 30 epochs on the nuScenes training set (approximately 28,000 annotated keyframes) and use the final checkpoint for evaluation, as we observe minimal overfitting due to the regularization provided by the pretrained initialization and frozen VAE encoder.

\subsubsection{Text-to-Image Generation}
We train our model on the LAION-aesthetic-6.5+ dataset, using the text prompt portion as training data. To enable efficient fine-tuning, we employ Low-Rank Adaptation (LoRA) with rank $r=64$ and LoRA alpha $\alpha=64$. In the SANA 1.6B experiments, the auxiliary model is therefore not a full-parameter copy of the teacher: only approximately 34M LoRA parameters are trainable, which is less than 2.2\% of the 1.6B-parameter backbone. We use the 8-bit Adam optimizer (AdamW8bit) with its default parameters for memory-efficient training. The learning rate is set to $1 \times 10^{-4}$ for both the auxiliary flow model $v_\phi$ and the student generator $G_\theta$, without any learning rate scheduling or warmup.

Training is conducted on 1 NVIDIA PRO 6000 GPU, with a batch size of 1 per GPU and no gradient accumulation. We enable mixed-precision training using BF16 to accelerate computation and reduce memory consumption while maintaining numerical stability. Gradient clipping with a maximum norm of 0.5 is applied to prevent gradient explosion. The output image resolution is set to $1024 \times 1024$ pixels to generate high-fidelity images.

For timestep sampling, we adopt a logit-normal biased sampling strategy similar to the 4D occupancy forecasting experiments, which allocates more sampling density to intermediate timesteps where the velocity field exhibits higher variance. When computing the mean flow, we employ an ODE solver with step size $\Delta \tau = 0.03$. The student model is configured for 4-step inference during evaluation. During distillation, the teacher model $u^Q$ is evaluated with classifier-free guidance (CFG) at a guidance scale of $w=4.5$ to provide strong conditional generation signals, while the student model operates without CFG to directly model the target distribution. We train for a total of 3000 optimization steps and use the final checkpoint for evaluation.

\section{Discussion}

\subsection{Relationship to Align Your Flow and Mean Flow}
\label{SM:relations_prior}

MFD is closely related to recent flow-model acceleration methods, but it occupies a different optimization regime. Align Your Flow (AYF)~\cite{alignYourFlow} is a consistency distillation method: it enforces that points along a teacher ODE trajectory map to the same terminal sample. MFD instead performs distribution matching in velocity space by aligning expected average velocity fields. Thus, AYF imposes sample-level flow-map consistency, while MFD imposes distribution-level mean-flow consistency. This distinction matters early in distillation, when the student distribution can be far from the teacher: exact sample-level constraints can be overly restrictive, whereas expected mean-flow matching tolerates imperfect intermediate samples while still guaranteeing $P=Q$ at convergence under \cref{the:afd}.

Mean Flow (MF)~\cite{MF} shares the idea of supervising time-integrated velocities, but the setting and objective are different. MF trains a generative flow model from scratch using data-derived mean velocity targets. MFD distills an already pretrained flow matching teacher into a single-step or few-step generator, and introduces an auxiliary flow model to estimate the velocity field induced by the evolving student distribution. Consequently, MFD is intended as a general acceleration mechanism for existing high-quality FM models rather than a replacement for full generative training.

\begin{table}[t]
    \caption{Conceptual comparison between MFD and closely related flow acceleration methods.}
    \label{tab:method_discussion}
    \centering
    \begin{tabular}{llll}
        \toprule
        Method & Setting & Alignment target & Main constraint \\
        \midrule
        AYF~\cite{alignYourFlow} & Distillation & Flow map & Sample-level consistency \\
        MF~\cite{MF} & Training from scratch & Mean velocity target & Data-derived flow learning \\
        MFD (ours) & Distillation & Expected average velocity & Distribution-level matching \\
        \bottomrule
    \end{tabular}
\end{table}

\subsection{The difference between Mean Flow Distillation and Rectified Reflow}
\label{sec:extended_reflow_comparison}

While 2-Rectified Flow (Reflow) \citep{flow_matching} and Mean Flow Distillation (MFD) share the overarching goal of accelerating sampling, they represent fundamentally different approaches to the problem of flow simplification. In this section, we argue that Reflow and MFD operate in distinct regimes. Reflow is a {trajectory regularization} technique, whereas MFD is a {distribution alignment} framework. We provide a detailed analysis of why the instantaneous supervision in Reflow remains insufficient for high-fidelity one-step generation on complex manifolds, and how MFD addresses this through temporal integration.

\subsubsection{Optimization Dynamics}

The most important difference lies in what is smoothed during optimization.

{Reflow Smoothes the Target (Label).}
The objective of Reflow is to regress the student's instantaneous velocity $v_\theta(X_t, t)$ toward the straight path connecting data pairs $(X_0, X_1)$:
\begin{equation}
    \mathcal{L}_{\text{Reflow}} = \mathbb{E}_{t, X_0} \left[ \| v_\theta(X_t, t) - (X_1 - X_0) \|^2 \right].
\end{equation}
Here, the target $Y = (X_1 - X_0)$ is indeed a low-variance, ``straightened'' signal because it represents the integral of the teacher's trajectory, $\int_0^1 u_{\text{teacher}} d\tau$. This effectively averages out the curvature of the teacher flow.
However, the {gradient} of this loss with respect to parameters $\theta$ depends directly on the student's instantaneous output:
\begin{equation}
    \nabla_\theta \mathcal{L}_{\text{Reflow}} \propto (v_\theta(X_t, t) - Y) \cdot \nabla_\theta v_\theta(X_t, t).
\end{equation}
Crucially, if the student network $v_\theta$ exhibits high variance or stochasticity at specific timesteps $t$ due to mini-batch noise or optimization instability, this noise propagates {directly} into the gradient update. Smoothing the target $Y$ does not mitigate the noise inherent in the estimator $v_\theta(X_t, t)$.

{MFD Smoothes the Estimator (Gradient).}
In contrast, MFD defines the loss over the {integrated} output of the auxiliary student flow $v^P$:
\begin{equation}
    \mathcal{L}_{\text{MFD}} \propto \left\| \int_s^t v^P(X_\tau, \tau) d\tau - \int_s^t u^Q(X_\tau, \tau) d\tau \right\|^2.
\end{equation}
From a signal processing perspective, the integration operator $\mathcal{I}(\cdot) = \int_s^t (\cdot) d\tau$ acts as a temporal low-pass filter on the student's vector field. If we model the student's instantaneous estimation as the true signal plus zero-mean optimization noise, $\hat{v} = v + \xi$, the Reflow gradient variance is bounded by $\text{Var}(\xi)$. The MFD gradient variance, however, is determined by the variance of the integral of $\xi$.

As derived in our theoretical analysis (Appendix~\ref{SM:low_var}), for a discrete approximation with $K$ steps, the variance scales as:
\begin{equation}
    \text{Var}(\nabla \mathcal{L}_{\text{MFD}}) \approx \frac{1}{K} \text{Var}(\nabla \mathcal{L}_{\text{Reflow}}).
\end{equation}
This establishes MFD not merely as an alternative, but as a strictly more stable optimization objective. On complex tasks like 4D occupancy forecasting, where the manifold topology induces high-frequency fluctuations in velocity fields, this factor of $1/K$ variance reduction is the difference between convergence and mode collapse.

\subsubsection{Geometric Interpretation}

Reflow and MFD enforce different geometric constraints on the generative mapping.

Reflow assumes that the optimal transport map can be well-approximated by a velocity field that is constant in time ($v = \text{const}$). While theoretically possible (Monge-Ampère transportation), in practice, learning a perfectly straight flow on high-dimensional, multimodal data is challenging. Residual curvature often remains.
Consequently, sampling Reflow with a single step ($NFE=1$) is an approximation: $X_1 \approx X_0 + v(X_0)$. Any deviation from straightness results in discretization error (truncation error), leading to blurry or artifact-heavy samples (as observed in Table 1 of the main text).

MFD does not inherently force the auxiliary flow to be straight. Instead, it encourages the student generator $G_\theta$ (a one-step mapper) to land on the correct distribution $Q$, as a relaxed target.
The auxiliary model $v^P$ and the integral formulation serve as a bridge. By matching the Mean Flow, we enforce:
\begin{equation}
    \underbrace{X_0 + \int_0^1 v^P d\tau}_{\text{Student Endpoint}} = \underbrace{X_0 + \int_0^1 u^Q d\tau}_{\text{Teacher Endpoint}}.
\end{equation}
This ensures that the {destination} matches the teacher, regardless of the intermediate path complexity. MFD creates a student $G_\theta$ that is explicitly trained to minimize the error of the {final sample} $X_1$, whereas Reflow minimizes the error of the {vector field} $v_t$. For one-step generation, minimizing endpoint error (via MFD) is the more direct and theoretically sound objective.

\subsubsection{Justification for the Auxiliary Model}

A potential critique of MFD is the extra computational overhead of training an auxiliary flow model $v^P$. We argue this cost is justified and necessary for three reasons:

\begin{enumerate}
    \item {Decoupling Generation and Estimation.} The student generator $G_\theta$ is a direct $X_0 \to X_1$ mapper. It does not possess a time-dependent vector field structure required for ODE-based comparison. The auxiliary model $v^P$ acts as a variational proxy, reconstructing the vector field implied by $G_\theta$. This allows us to leverage the robust mathematical framework of Flow Matching to train a model that does not output vector fields (the Student).
    
    \item {Zero Inference Overhead.} It is crucial to note that the auxiliary model $v^P$ is strictly a training-time artifact. During inference, only $G_\theta$ is used. Thus, MFD achieves the inference efficiency of a GAN (1 step) with the training stability of Flow Matching.
    
    \item {Handling Complex Topologies.} In our experiments on LiDAR-based 4D occupancy, we observed that direct distillation methods failed to capture the discrete-continuous nature of the data. The auxiliary model provides the necessary capacity to approximate the mean velocity of these complex transitions, preventing the ``averaging'' artifacts common in simpler regression objectives. We give a mathematical proof of this in~\cref{SM:auxiliary_necessity}.
\end{enumerate}

\subsection{A Signal Processing Perspective on Flow Distillation}
\label{sec:discussion_signal}

In this section, we re-examine Mean Flow Distillation through the perspective of signal processing, formally characterizing the proposed method as a {temporal low-pass filter} applied to the supervision signal. We posit that this spectral filtering property is the primary driver of the observed training stability and variance reduction.

In the context of distillation, the student model $G_\theta$ attempts to align its induced flow with that of a pre-trained teacher $u^Q$. However, the teacher's instantaneous velocity field $u^Q(X_t, t)$ often serves as a noisy proxy for the ideal transport dynamics due to optimization stochasticity and high-frequency curvature in the learned manifold.

We define the supervision signal $\mathbf{S}(t)$ at a trajectory point $X_t$ as a superposition of the true underlying transport semantics $\mathbf{v}^*(t)$ (the signal) and a stochastic error term $\boldsymbol{\xi}(t)$ (the noise):
\begin{equation}
    \mathbf{S}(t) = u^Q(X_t, t) = \mathbf{v}^*(t) + \boldsymbol{\xi}(t),
\end{equation}
where $\boldsymbol{\xi}(t)$ represents high-frequency oscillatory components arising from approximation errors or local geometric perturbations. In standard Instantaneous Velocity Field Matching (analogous to VSD), the optimization objective $\mathcal{L}_{\text{IVF}}$ forces the student to minimize the discrepancy at every instant $t$:
\begin{equation}
    \min_\theta \mathbb{E}_{t} \left[ \| v^P(X_t, t) - (\mathbf{v}^*(t) + \boldsymbol{\xi}(t)) \|_2^2 \right].
\end{equation}
From a signal processing standpoint, this objective functions as an {all-pass filter}. The student is penalized for failing to reproduce the noise term $\boldsymbol{\xi}(t)$, leading to high-variance gradients and overfitting to local irregularities that are irrelevant to the global transport map.

MFD fundamentally alters this dynamic by shifting the alignment target from the instantaneous velocity to the Average Velocity Field (AVF). Recall our definition of the AVF over the interval $[s, t]$:
\begin{equation}
    U(X_s, s, t) = \frac{1}{t-s} \int_{s}^{t} u^Q(\psi_\tau(X_0), \tau) \, d\tau.
\end{equation}
Applying the signal decomposition to the AVF yields:
\begin{equation}
    U(X_s, s, t) = \underbrace{\frac{1}{\Delta t} \int_{s}^{t} \mathbf{v}^*(\tau) \, d\tau}_{\text{Smoothed Signal}} + \underbrace{\frac{1}{\Delta t} \int_{s}^{t} \boldsymbol{\xi}(\tau) \, d\tau}_{\text{Partially Filtered Noise}},
\end{equation}
where $\Delta t = t-s$. The operation $\frac{1}{\Delta t} \int_{s}^{t} (\cdot) d\tau$ corresponds to a temporal boxcar filter (or moving average). In the frequency domain, the magnitude response of such a filter is given by the sinc function $| \text{sinc}(f \Delta t) |$, which decays as $1/f$. Consequently, the integration operator acts as a low-pass filter, significantly attenuating the high-frequency noise components in $\boldsymbol{\xi}(t)$ while preserving the low-frequency trends of the transport trajectory $\mathbf{v}^*(t)$.

This spectral interpretation explains why MFD is robust to local perturbations. By supervising the student with the integral of the velocity field—effectively the displacement vector—we enforce {global trajectory consistency} (phase alignment) rather than {local tangent alignment}. The student learns to match the global displacement rather than the volatile instantaneous velocity.

\subsection{Future Research Directions}
\label{SM:future_research_directions}
The success of Mean Flow Distillation (MFD) suggests a fundamental shift in how we approach generative model acceleration: moving from instantaneous velocity matching to time-integrated mean flow alignment significantly reduces gradient variance and improves training stability. This integral-based supervision effectively functions as a temporal low-pass filter against optimization noise, opening the door to multi-scale temporal distillation strategies. From a geometric perspective, MFD's emphasis on temporal trajectory implies that preserving the properties of the transport path is more critical than precise pointwise alignment. This insight paves the way for extending flow distillation to non-Euclidean manifolds or the spectral domain, where explicit trajectory regularization can better handle complex data topologies. Ultimately, MFD demonstrates that robust distillation requires looking beyond the teacher's instantaneous output to capture the cumulative dynamics of where the teacher leads over time.

\subsection{Discussion on Error Accumulation in Temporal Integration}
\label{SM:error_accumulation}

One intuitive perspective on why Mean Flow Distillation provides stronger supervision than instantaneous velocity matching is the error accumulation effect during temporal integration. This viewpoint suggests that if the student's velocity field deviates from the teacher's at any point along a trajectory, these local deviations compound when integrated over the time interval $[s, t]$, resulting in larger discrepancies in the average velocity field. In this sense, temporal integration acts as an amplification mechanism that makes even small instantaneous velocity errors more detectable and correctable.

Consider a particle moving along a flow trajectory. If at each infinitesimal time step $d\tau$, the student's velocity $v^P(X_\tau, \tau)$ differs from the teacher's velocity $u^Q(X_\tau, \tau)$ by a small amount $\delta v(\tau)$, then over the interval $[s, t]$, the cumulative displacement error becomes:
\begin{equation}
    \Delta_{\text{cumulative}} = \int_s^t \delta v(\tau) \, d\tau = \int_s^t \left(v^P(X_\tau, \tau) - u^Q(X_\tau, \tau)\right) d\tau
\end{equation}
This cumulative error is reflected in the mean velocity discrepancy:
\begin{equation}
    U^P(X_s, s, t) - U^Q(X_s, s, t) = \frac{1}{t-s} \int_s^t \left(v^P(X_\tau, \tau) - u^Q(X_\tau, \tau)\right) d\tau
\end{equation}
From this perspective, the temporal integration converts pointwise errors into integrated displacement errors, which could be viewed as providing a stronger supervision signal that accumulates information about the velocity field mismatch over the entire time interval rather than at a single instant.

\textbf{Relationship to Variance Reduction.} While this error accumulation perspective offers an intuitive explanation for why matching mean velocities might be more effective than matching instantaneous velocities, it is important to note that this viewpoint focuses on the magnitude of the supervision signal under the assumption that the student model systematically deviates from the teacher. In practice, the primary challenge in distillation is not necessarily the magnitude of the error signal, but rather the variance and noise in gradient estimation caused by stochastic factors such as mini-batch sampling, limited model capacity, and numerical approximation errors.

Our theoretical analysis in~\cref{SM:low_var} demonstrates that the key advantage of MFD lies in its variance reduction property: by averaging instantaneous velocity estimates over multiple integration steps, the temporal integration acts as a low-pass filter that suppresses high-frequency optimization noise, yielding gradient estimates with variance scaling as $\frac{1}{K}$ (where $K$ is the number of integration steps) compared to instantaneous matching. This variance reduction is the mathematically rigorous mechanism underlying MFD's improved training stability and convergence.

In summary, the error accumulation and variance reduction perspectives are complementary rather than contradictory:
\begin{itemize}
    \item \textbf{Error accumulation} provides an intuitive geometric interpretation: temporal integration converts local velocity mismatches into global displacement errors, potentially making the supervision signal more informative about trajectory-level alignment.
    \item \textbf{Variance reduction} provides a rigorous statistical analysis: temporal integration averages out stochastic noise across multiple time steps, yielding more stable and reliable gradient estimates that facilitate convergent optimization.
\end{itemize}

While we find the error accumulation viewpoint conceptually appealing, we have not yet identified a suitable mathematical or experimental framework to rigorously validate and quantify this effect independently of the variance reduction mechanism. Future work could explore decomposing these two effects---perhaps through carefully controlled synthetic experiments where the noise level and systematic bias of velocity field estimates can be independently manipulated---to better understand their relative contributions to MFD's effectiveness.

\subsection{Discussion on the rationality of omitting the Jacobian term}
\label{SM:jacobian_term}

In the derivation of Mean Flow Distillation, when computing the gradient with respect to the student generator parameters $\theta$, the exact mathematical formulation would involve backpropagating through the ODE integration process, which theoretically requires computing the Jacobian matrices of both the auxiliary flow model $v_\phi$ and the teacher model $u^Q$. However, in our practical implementation, we treat these Jacobian terms as identity matrices, effectively stopping the gradient flow through the auxiliary and teacher models when computing $\nabla_\theta \mathcal{L}_{MFD}$. This approximation, while mathematically inexact, is critical for achieving stable and high-quality distillation results.

To empirically investigate the impact of this approximation, we conducted a comparative analysis during the text-to-image generation experiments (SANA 1.6B~\cite{sana}) over the first 10,000 training steps. We computed the gradients with respect to the intermediate variable $X_s$ under two scenarios: (1) with full Jacobian computation (backpropagating through both auxiliary and teacher models), and (2) with the identity matrix approximation (stopping gradients at the auxiliary and teacher models). \Cref{fig:jacobian_comparison} presents histograms comparing these two gradient computation strategies along two critical dimensions.

\begin{figure}[t]
  \begin{center}
    \includegraphics[width=0.9\columnwidth]{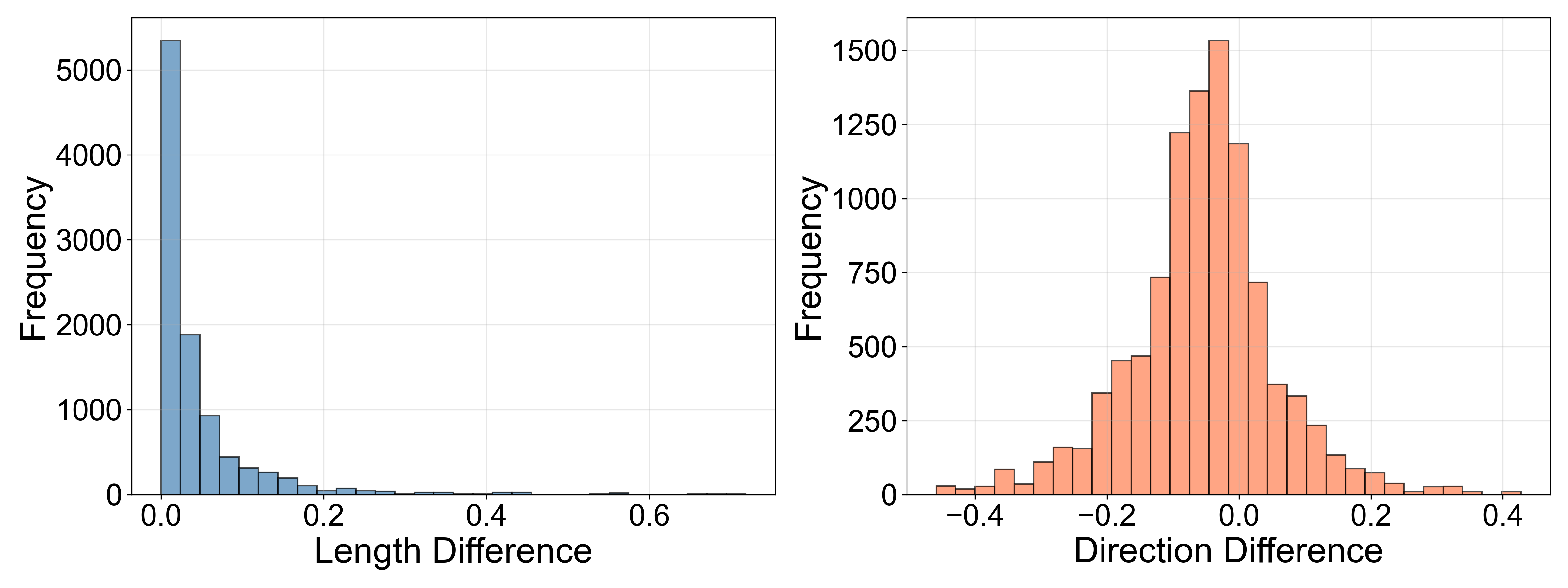}
    \caption{Comparison of gradient differences when computing vs. omitting the Jacobian term during backpropagation to $X_s$ over 10,000 training steps. Left: Length difference histogram showing the absolute difference in gradient magnitudes $|\|\nabla_{X_s}^{\text{full}}\| - \|\nabla_{X_s}^{\text{approx}}\||$. Right: Direction difference histogram showing the cosine similarity between normalized gradient vectors. The results reveal that while gradient magnitudes remain similar, the directions diverge significantly when including the full Jacobian computation.}
    \label{fig:jacobian_comparison}
  \end{center}
  \vskip -0.15in
\end{figure}

The left panel of~\cref{fig:jacobian_comparison} shows the length difference histogram, which measures the absolute difference between the L2 norms of the two gradient vectors: $|\|\nabla_{X_s}^{\text{full}}\| - \|\nabla_{X_s}^{\text{approx}}\||$. This metric evaluates whether the gradient magnitudes remain consistent across the two computation strategies. The right panel shows the direction difference histogram, which computes the cosine similarity between the normalized gradient vectors, providing a measure of directional alignment in the range $[-1, 1]$, where values close to 1 indicate strong directional agreement.

The empirical results reveal a striking phenomenon: while the gradient magnitudes remain relatively consistent (left panel shows most differences concentrated near zero), the gradient directions diverge dramatically (right panel shows a broad distribution with many samples exhibiting low or even negative cosine similarity). This directional inconsistency explains why training with full Jacobian computation fails to produce reasonable, high-quality results in our experiments. We hypothesize that this directional distortion arises because backpropagating through both the auxiliary flow model and the teacher model introduces substantial noise and complex higher-order dependencies that corrupt the gradient signal. The ODE integration process involves multiple sequential function evaluations, and backpropagating through this entire computational graph amplifies approximation errors and numerical instabilities, ultimately distorting the gradient direction away from the true descent direction toward the optimal student distribution.

In contrast, the identity matrix approximation, stopping gradients at the auxiliary and teacher models, provides a cleaner, more direct supervision signal. This approximation aligns with the motivation behind score distillation methods: the teacher model serves as a fixed evaluator that provides supervision signals, rather than a differentiable component of the student's training objective. By treating the mean flow computed from the teacher and auxiliary models as a fixed target, we avoid the gradient distortion caused by backpropagating through complex, potentially ill-conditioned neural ODEs. This pragmatic approximation strategy has precedent in the distillation literature. Score Distillation Sampling (SDS)~\cite{sds} and its variants similarly stop gradients through the teacher diffusion model, treating the teacher's score predictions as fixed supervision signals rather than learnable components.

We do not claim a tight theoretical bound on the error introduced by omitting the Jacobian term. Deriving such a bound for neural ODE distillation remains an open problem shared by SDS, VSD, DMD, and DMD2-style objectives, which also treat teacher predictions as fixed guidance targets. Importantly, the approximation affects the optimization trajectory rather than the fixed point characterized by \cref{the:afd}: if the expected average velocity fields are matched exactly, the Mean Flow Matching Theorem still implies $P=Q$. The approximation is therefore best understood as an empirically necessary optimization device for reaching a stable solution in high-dimensional models, not as a change to the distributional alignment condition itself.

In summary, while the identity matrix approximation of the Jacobian term is mathematically inexact, it is a principled and empirically validated engineering choice that prevents gradient distortion, stabilizes optimization, and enables MFD to achieve state-of-the-art distillation performance. The stark directional divergence observed in~\cref{fig:jacobian_comparison} provides compelling evidence that omitting the Jacobian computation is not merely a computational convenience, but a necessary design decision for robust flow matching distillation.

\section{Theoretical Demonstration}
\subsection{The Proof of~\cref{the:ivfm} (Instantaneous Velocity Field Matching)}
\label{SM:proof_ivfm}

\textbf{Theorem Statement:}
Let $u^Q$ and $v^P$ be the instantaneous velocity fields corresponding to the teacher distribution $Q$ and the student distribution $P$, respectively. Assume both velocity fields satisfy Lipschitz continuity. Let $D(\cdot, \cdot)$ denote a Bregman divergence on $\mathbb{R}^d$ satisfying strict positive definiteness ($D(x,y) = 0 \iff x=y$). If the expected Bregman divergence between the instantaneous velocity fields is zero:
\begin{equation}
    \mathcal{L}_{IVF} = \mathbb{E}_{t \sim \mathcal{U}[0,1], X_t \sim p_t^P} \left[ D\left(v^P(X_t, t), u^Q(X_t, t)\right) \right] = 0,
\end{equation}
then the student distribution coincides with the teacher distribution, i.e., $P = Q$.

\textbf{Proof:}
Since $\mathcal{L}_{IVF}$ is the expectation of a non-negative term and equals zero, the integrand must be zero almost everywhere. Given that: (1) $D(x, y) \geq 0$; (2) $\mathcal{U}[0,1]$ has full support; (3) $p_t^P$ is the probability density induced by the flow $\psi_t^P$; and (4) the velocity fields satisfy Lipschitz continuity, we conclude that for all $t \in [0, 1]$ and all $x$ in the support of $p_t^P$:
\begin{equation}
    v^P(x, t) = u^Q(x, t)
\end{equation}
Consider any initial point $X_0 \sim p_0$. The trajectory under the student flow satisfies:
\begin{equation}
    \frac{d}{dt}\psi_t^P(X_0) = v^P(\psi_t^P(X_0), t), \quad \psi_0^P(X_0) = X_0
\end{equation}
Since $v^P(x, t) = u^Q(x, t)$ everywhere on the support, this trajectory also satisfies the teacher ODE:
\begin{equation}
    \frac{d}{dt}\psi_t^P(X_0) = u^Q(\psi_t^P(X_0), t)
\end{equation}
By the Picard-Lindelöf theorem (uniqueness of ODE solutions under Lipschitz conditions), we have $\psi_t^P(X_0) = \psi_t^Q(X_0)$ for all $t \in [0,1]$. Since this holds for any $X_0 \sim p_0$, the induced distributions satisfy $P = (\psi_1^P)_\# p_0 = (\psi_1^Q)_\# p_0 = Q$.

\subsection{The Proof of~\cref{the:afd} (Mean Flow Matching Theorem)}
\label{SM:proof1}

\textbf{Theorem Statement:}
Let $U^Q(z,s,t)$ and $U^P(z,s,t)$ be the average velocity fields corresponding to the distributions $Q$ and $P$, respectively. Taking $U^P$ as an example, it satisfies the following integral relationship for $0 \leq s < t \leq 1$:
\begin{equation}
    (t-s) U^{P}(z_{s}, s, t) = \int_{s}^{t} v^{P}\left(\psi_{\tau}^{P}\left((\psi_{s}^{P})^{-1}(z_{s})\right), \tau\right) d\tau.
\end{equation}
Intuitively, the AVF describes the mean displacement vector of a particle moving from position $z_s$ to $z_t$ over the time interval $[s, t]$, i.e., $z_t = z_s + (t-s)U^P(z_s, s, t)$. 

Let $D(\cdot, \cdot)$ denote a Bregman divergence on $\mathbb{R}^d$ satisfying strict positive definiteness ($D(x,y) = 0 \iff x=y$). If the expected Bregman divergence between the AVF of the auxiliary flow $P$ and the teacher flow $Q$ is zero:
\begin{equation}
\label{eq:mfd_loss_proof}
\begin{aligned}
    \mathcal{L}_{MFD} &= \mathbb{E}_{s, t \sim p(s, t), X_{s} \sim p_{s}^{P}} \\&\left[ D\left(U^{P}(X_{s}, s, t), U^{Q}(X_{s}, s, t)\right) \right] = 0,
\end{aligned}
\end{equation}
then the student distribution coincides with the teacher distribution, i.e., $P = Q$. Here, $p(s,t)$ denotes a distribution with full support on the time domain, and the velocity fields are assumed to satisfy Lipschitz continuity.

\textbf{Proof:}
We provide a proof by contradiction to demonstrate that minimizing the Mean Flow Distillation loss implies matching the instantaneous velocity fields.

\textbf{Assumption for Contradiction:}
Assume that the student instantaneous velocity field $v^P$ does not match the teacher instantaneous velocity field $u^Q$ somewhere. 
That is, there exists a set $\Omega$ with positive measure under the measure induced by the flow, such that for all $(x, s) \in \Omega$:
\begin{equation}
    v^P(x, s) \neq u^Q(x, s)
\end{equation}

Given that the vector fields satisfy Lipschitz continuity conditions, if they differ at a point, they must differ in a local neighborhood. Thus, there exists a continuous region (a ball of radius $\epsilon > 0$) around some point $(x^*, s^*) \in \Omega$ where the inequality holds strictly. Let us denote this region as $\mathcal{B}_\epsilon$.

We now examine the Average Velocity Field $U(x, s, t)$, which relates to the instantaneous field via integration:
\begin{equation}
    U(x, s, t) = \frac{1}{t-s} \int_{s}^{t} v(\psi_{\tau}(x_0), \tau) \, d\tau
\end{equation}
If $v^P(x, \tau) \neq u^Q(x, \tau)$ consistently within the measurable region $\mathcal{B}_\epsilon$, for any starting time $s$ inside this region, there exists a sufficiently small $\delta > 0$ such that for all $t \in (s, s+\delta]$, the integral averages (the mean flows) must also differ:
\begin{equation}
    U^P(x, s, t) \neq U^Q(x, s, t)
\end{equation}
Intuitively, two continuous functions that are strictly separated at a point cannot have identical averages over an arbitrarily small interval around that point.

\textbf{Deriving the Contradiction:}
The strict positive definiteness of the Bregman divergence states that $D(x, y) \geq 0$ and $D(x, y) = 0 \iff x = y$. 
Consequently, in the region identified above where $U^P \neq U^Q$, we have:
\begin{equation}
    D\left(U^{P}(x, s, t), U^{Q}(x, s, t)\right) > 0
\end{equation}

Since the sampling distribution $p(s, t)$ has full support over the domain $0 \leq s < t \leq 1$, it assigns non-zero probability mass to the interval $(s, s+\delta]$ within the region $\mathcal{B}_\epsilon$. Therefore, when integrating the loss function, we are integrating a strictly positive term over a set of positive measure:
\begin{equation}
    \mathcal{L}_{M F D} \geq \int_{\mathcal{B}_\epsilon \cap \{t \in (s, s+\delta]\}} p(s,t)p_s^P(x) \underbrace{D\left(U^{P}, U^{Q}\right)}_{>0} \, dx \, ds \, dt > 0
\end{equation}

This implies $\mathcal{L}_{M F D} > 0$, which directly contradicts the premise that $\mathcal{L}_{M F D} = 0$.

\textbf{Conclusion:}
The assumption that $v^P \neq u^Q$ on a set of positive measure must be false. Therefore, we conclude that:
\begin{equation}
    v^P(x, s) = u^Q(x, s) \quad \text{almost everywhere}
\end{equation}
By invoking \cref{the:ivfm} (Instantaneous Velocity Field Matching), the equality of instantaneous velocity fields implies the alignment of the generated distributions, i.e., $P = Q$.

\subsection{MFD Has Lower Variance}
\label{SM:low_var}
In this part, through theoretical analysis, we demonstrate that Mean Flow Distillation (MFD) exhibits significantly lower gradient estimation variance than Variational Score Distillation (VSD), thereby substantially stabilizing the training process.

Assuming that the neural network’s estimation of the true vector field is unbiased and subject to stochastic errors, which may arise from limited model capacity, mini-batch sampling, parameter approximation errors, or stochastic regularization such as dropout. Let $v(X_t,t)$ denote the true instantaneous velocity field. The neural network estimation $\hat{v}(X_t,t)$ can be modeled as the sum of the true field and a zero-mean noise term:
\begin{equation}
\hat{v}(X_t, t)=v(X_t, t)+\xi(X_t, t), \quad \text { with } \mathbb{E}[\xi]=0,, \operatorname{Cov}[\xi(X_t, t)]=\Sigma(X_t, t),
\end{equation}
here $\Sigma(X_t, t)$ is the covariance matrix of the noise. For simplicity, we assume that the noise has an isotropic constant variance $\sigma^2 I$ over short time intervals. The gradient in Variational Score Distillation (VSD)~\cite{prolificdreamer} depends directly on the difference of instantaneous scores or instantaneous velocities. Consequently, the VSD gradient estimator $g_{\mathrm{VSD}}$ is proportional to the instantaneous estimation error:
\begin{equation}
    g_{VSD} \propto \hat v(X_t,t) - \hat u^Q(X_t,t)
\end{equation}
The variance is therefore directly inherited from the noise in the instantaneous estimation.
\begin{equation}
    \text{Var}(g_{VSD})\propto \text{Var}(v(X_t,t)) - \text{Var}(u^Q(X_t,t)) \approx 2\sigma^2
\end{equation}

This implies that the gradient variance of VSD is directly bounded by the noise level $\sigma^2$ of a single forward pass, and cannot be reduced through a single-step computation.

In contrast, the gradient of MFD depends on the average velocity estimate over a time interval $[s, t]$. Let $\Delta t = t - s$, the average velocity estimator $\hat{U}$ is defined as:
\begin{equation}
\begin{aligned}
    & \hat{U}=\frac{1}{\Delta t} \int_{s}^{t} \hat{v}\left(X_{\tau}, \tau\right) d \tau=\frac{1}{\Delta t} \int_{s}^{t}\left(v\left(X_{\tau}, \tau\right)+\xi\left(X_{\tau}, \tau\right)\right) d \tau \\ & \hat{U}=U_{\text {true }}+\underbrace{\frac{1}{\Delta t} \int_{s}^{t} \xi\left(X_{\tau}, \tau\right) d \tau}_{\text {Integrated Noise }}
\end{aligned}
\end{equation}
In practice, this integral is typically approximated using a numerical ODE solver (e.g., a $K$-step Euler method). Under this discretization, the integral reduces to a summation over $K$ sampled steps:
\begin{equation}
    \frac{1}{\Delta t}\int_s^t \xi(\tau)\, d\tau \approx \frac{1}{K}\sum_{k=1}^{K} \xi_k
\end{equation}

Assuming that the noise terms $\xi_k$ at each integration step are independent and identically distributed (IID), the variance of the average velocity error term is given by:
\begin{equation}
    \mathrm{Var}(g_{MFD}) \propto 
\mathrm{Var}\!\left( \frac{1}{K}\sum_{k=1}^{K} \xi_k \right)
= \frac{1}{K^2}\sum_{k=1}^{K} \mathrm{Var}(\xi_k)
= \frac{1}{K^2}\cdot K\sigma^2
= \frac{\sigma^2}{K}
\end{equation}
Thus, the relationship between the variance of VSD and MFD is
\begin{equation}
    \mathrm{Var}(g_{MFD}) \propto \frac{1}{K}\mathrm{Var}(g_{VSD})
\end{equation}
This result reveals a critical insight: when $K = 1$ (single-step ODE solver), MFD degenerates to a form similar to VSD with comparable variance ($\mathrm{Var}(g_{MFD}) \approx \mathrm{Var}(g_{VSD})$), as the temporal integration provides no averaging effect. The variance reduction benefit of MFD emerges only when $K > 1$, where the temporal integration mechanism explicitly averages estimation noise across multiple time steps, yielding the $1/K$ variance scaling. In our experiments, we use a fixed ODE solver step size $\Delta \tau$ (e.g., 0.02 for 4D occupancy and 0.03 for text-to-image), and the effective number of integration steps $K$ varies adaptively with each sampled time interval: $K \approx (t-s)/\Delta \tau$.

\subsection{Necessity of the Auxiliary Flow Model}
\label{SM:auxiliary_necessity}

In this section, we demonstrate both empirically and theoretically that the auxiliary flow model $v^P$ is essential for Mean Flow Distillation. We show that attempting to replace the marginal velocity field $v^P$ with conditional velocity fields leads to training failure and theoretical inconsistency with~\cref{the:ivfm}.

\subsubsection{Empirical Evidence: Training Collapse Without Auxiliary Model}

We conducted ablation experiments on the text-to-image generation task (SANA 1.6B~\cite{sana}) where we attempted to train without the auxiliary flow model by directly using conditional velocity fields.  Instead of using an auxiliary model $v^P$ to approximate the marginal velocity field of the student distribution $P$, we directly use the conditional velocity field $v^P(X_t | X_1)$ where $X_1 = G_\theta(X_0)$ is the student's generated sample. Under the linear interpolation path $X_t = (1-t)X_0 + tX_1$, this conditional velocity is simply $v^P(X_t | X_1) = X_1 - X_0$. This variant failed to produce meaningful results. The training led to rapid collapse within the first 500 iterations. The generated images quickly degenerated into blurry, semantically meaningless outputs with extremely low Image Reward~\cite{imagereward} ($<$0.0) and CLIP Scores~\cite{clip} ($<$20.0). 

These empirical failures motivated us to rigorously investigate the theoretical foundations of why the auxiliary model is necessary.

\subsubsection{Theoretical Analysis: Conditional vs. Marginal Velocity Fields}

We now prove that replacing the marginal velocity field $v^P$ with conditional velocity fields violates the distributional equivalence stated in~\cref{the:ivfm}, leading to degenerate solutions. Let $P$ denote the student distribution induced by $G_\theta$, where $X_1 = G_\theta(X_0)$ with $X_0 \sim \mathcal{N}(0, I)$. Consider the linear interpolation path $X_t = (1-t)X_0 + tX_1$. The conditional velocity field along this path is:
\begin{equation}
    v^P(X_t | X_1) = X_1 - X_0 = \frac{dX_t}{dt}\Big|_{X_1 \text{ fixed}}
\end{equation}

The marginal velocity field $v^P(X_t, t)$ (learned by the auxiliary model) is defined as the conditional expectation:
\begin{equation}
    v^P(X_t, t) = \mathbb{E}[X_1 - X_0 | X_t] = \mathbb{E}[v^P(X_t | X_1) | X_t]
\end{equation}

These two velocity fields are fundamentally different. The conditional velocity $v^P(X_t | X_1)$ depends on the specific sample $X_1$, while the marginal velocity $v^P(X_t, t)$ averages over all possible $X_1$ values that could have led to the observed $X_t$.

\begin{proposition}
\label{prop:conditional_failure}
Consider using the conditional velocity field $v^P(X_t | X_1) = X_1 - X_0$ (where $X_1 = G_\theta(X_0)$) in place of the marginal velocity field $v^P(X_t, t)$ in the distillation framework. Specifically, consider the instantaneous velocity matching objective:
\begin{equation}
    \mathcal{L}_{\text{cond}} = \mathbb{E}_{t, X_0}\left[ \left\| v^P(X_t | X_1) - u^Q(X_t, t) \right\|^2 \right]
\end{equation}
where $X_t = (1-t)X_0 + tX_1$ and $X_1 = G_\theta(X_0)$.

Then, even if $\mathcal{L}_{\text{cond}} = 0$, this does not imply $P = Q$ as required by~\cref{the:ivfm}. Instead, the conditional velocity field cannot represent a valid marginal flow, and attempting to minimize $\mathcal{L}_{\text{cond}}$ leads to a degenerate solution where the optimal generator produces a constant output:
\begin{equation}
    G_\theta^*(X_0) = \bar{X} \quad \forall X_0, \quad \text{where } \bar{X} = \mathbb{E}_{X \sim Q}[X]
\end{equation}
That is, the student distribution collapses to a point mass at the mean of the teacher distribution: $P = \delta_{\bar{X}}$.
\end{proposition}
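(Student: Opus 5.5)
The plan is to exploit the one time at which the teacher's velocity field has a closed form, namely $t=0$, and show that the conditional objective pins the generator there. The teacher $u^Q$ is a rectified flow trained with the independent coupling $X_0\sim\mathcal N(0,I)$, $X_1\sim Q$. By \cref{eq:optimal}, at $t=0$ we have $X_t=X_0$, so
\begin{equation*}
u^Q(x,0)=\mathbb E[X_1-X_0\mid X_0=x]=\mathbb E_{Q}[X]-x=\bar X-x .
\end{equation*}
The conditional target is $v^P(X_t\mid X_1)=G_\theta(X_0)-X_0$. Since $X_1=G_\theta(X_0)$ uses the same noise, the integrand of $\mathcal L_{\text{cond}}$ is a deterministic function of $(X_0,t)$:
\begin{equation*}
h(X_0,t)=\bigl\|G_\theta(X_0)-X_0-u^Q\bigl((1-t)X_0+tG_\theta(X_0),t\bigr)\bigr\|^2 .
\end{equation*}

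\textbf{Step 1: zero loss forces collapse.} If $\mathcal L_{\text{cond}}=0$, then $h=0$ for $p_0\otimes\mathcal U[0,1]$-almost every $(X_0,t)$. The Lipschitz assumption on $u^Q$ used in \cref{the:ivfm} makes $h(X_0,\cdot)$ continuous in $t$, so the equality extends to $t=0$ for almost every $X_0$. Substituting the closed form above gives $G_\theta(X_0)-X_0=\bar X-X_0$, that is, $G_\theta(X_0)=\bar X$ for $p_0$-almost every $X_0$, and hence $P=\delta_{\bar X}$. Consequently $P=Q$ can hold only when $Q$ is itself a point mass. This is the sense in which the implication of \cref{the:ivfm} fails. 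The degeneracy traces to two facts:
\begin{itemize}
\item $X_1-X_0$ is the velocity of a straight line, and the straight lines from independent noise to distinct $X_1$ cross, so this field is not the marginal field $\mathbb E[X_1-X_0\mid X_t]$.
\item The hypothesis of \cref{the:ivfm} requires the marginal field $v^P(X_t,t)=\mathbb E[X_1-X_0\mid X_t]$. The tower property relates the two fields, but they are not equal.
\end{itemize}

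\textbf{Step 2: the minimization statement.} Here I need the optimum to be the constant map even when the loss cannot be driven to zero. I would argue in two parts.
\begin{itemize}
\item For each fixed $X_0$, the per-sample term $h(X_0,t)$ is minimized near $t=0$ by $G_\theta(X_0)=\bar X$. A first-order expansion of $u^Q$ in $t$, controlled by its Lipschitz constant, makes the discrepancy from this choice $O(t)$.
\item At the constant generator, $X_t=(1-t)X_0+t\bar X$. I would then show that the gradient of $\mathcal L_{\text{cond}}$ with respect to $G_\theta(X_0)$ vanishes to leading order. Because each $X_0$ contributes independently, this gives a pointwise stationarity condition.
\end{itemize}

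\textbf{Main obstacle.} Step 2 is where I expect difficulty. For $t>0$, $u^Q(X_t,t)$ at the collapsed points need not equal $\bar X-X_0$ exactly, so the constant map is a fixed point only approximately or in the small-$t$ regime. The first thing I would try is to weight the time distribution toward $t\to0$, or to bound the $t>0$ contribution using the Lipschitz constant. If that does not close, I would state the collapse rigorously only as the characterization of zero-loss solutions from Step 1. That case already suffices for the claim that $\mathcal L_{\text{cond}}=0\not\Rightarrow P=Q$.
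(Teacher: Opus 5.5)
Your $t=0$ computation is correct, and it is a cleaner route than the paper's heuristic: for the ideal rectified-flow teacher, $u^Q(x,0)=\bar X-x$, so zero loss forces $G_\theta\equiv\bar X$ a.e. The conclusion you draw from it is wrong, though. To show $\mathcal L_{\text{cond}}=0\not\Rightarrow P=Q$ you need a generator with zero loss and $P\neq Q$. Step 1 only says what such a generator would have to be, and none exists.

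If the integrand vanishes for a.e.\ $(X_0,t)$, the segment $X_t=(1-t)X_0+tG_\theta(X_0)$ satisfies $X_t=X_0+\int_0^t u^Q(X_\tau,\tau)\,d\tau$. It is therefore an integral curve of the teacher field started at $X_0$. The Picard--Lindel\"of argument from the proof of \cref{the:ivfm} then gives $G_\theta=\psi_1^Q$ a.e., hence $P=Q$. Combined with your Step 1, which uses the ideal-teacher form of $u^Q$, this forces $Q=\delta_{\bar X}$.

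So for non-degenerate $Q$ the hypothesis $\mathcal L_{\text{cond}}=0$ is unattainable, and whenever it does hold, $P=Q$. The first claim of the proposition is false as stated. Your fallback (``the characterization of zero-loss solutions already suffices'') does not prove it. What Step 1 legitimately shows is $\inf_\theta\mathcal L_{\text{cond}}>0$ whenever $Q$ is not a point mass. A minor point: passing from a.e.\ $t$ to $t=0$ needs continuity of $u^Q$ in $t$, which Lipschitz continuity in $x$ alone does not give. For the ideal teacher it holds when $Q$ has a finite first moment.

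Step 2 fails outright, not just technically. With an unconstrained generator the objective decouples over $X_0$, so nothing forces $G_\theta$ to become independent of $X_0$. The paper's own argument skips this same step when it passes from $X_1\approx X_0+\mathbb E_t[u^Q(X_t,t)]$ to $G^*_\theta(X_0)\approx\mathbb E_{X_0,t}[X_0+u^Q(X_t,t)]$.

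Here is a concrete counterexample with $t\sim\mathcal U[0,1]$. Take $d=1$ and $Q=p_0=\mathcal N(0,1)$, so $\bar X=0$ and $u^Q(z,t)=\frac{2t-1}{a_t}z$ with $a_t=(1-t)^2+t^2$. Writing $y=G_\theta(x)$, the residual is $\bigl((1-t)y-tx\bigr)/a_t$, and
\[
\int_0^1\frac{\bigl((1-t)y-tx\bigr)^2}{a_t^2}\,dt=\frac{\pi}{4}y^2-xy+\frac{\pi}{4}x^2 ,
\]
whose minimizer is $y=\frac{2}{\pi}x$.
\begin{itemize}
\item At the constant map $y=0$ the derivative is $-x\neq 0$, so it is not even stationary. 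It is also worse than the teacher map $y=x$, since $\frac{\pi}{2}-1<\frac{\pi}{4}$.
\item The paper's stop-gradient condition $\mathbb E_t[X_1-X_0-u^Q(X_t,t)]=0$ reduces to $\frac{\pi}{4}(y-x)=0$. It is solved by $y=x$, not by a constant.
\item For $Q=\mathcal N(\mu,\sigma^2)$ the same computation gives a minimizer $\mu+c_\sigma x$ with $c_\sigma>0$.
\end{itemize}
Re-weighting $t$ toward $0$ recovers collapse only in the limit where the time law degenerates to $\delta_0$, and that changes the objective. So the collapse claim cannot be proved. What is true and provable is that the minimizer of $\mathcal L_{\text{cond}}$ is in general not $\psi_1^Q$. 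Here it yields $P^*=\mathcal N(0,4/\pi^2)\neq Q$, a shrinkage toward the mean. That still supports the necessity of the auxiliary model, but it is not collapse to $\delta_{\bar X}$.
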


\textbf{Proof:}

The conditional velocity field $v^P(X_t | X_1) = X_1 - X_0$ is defined along a specific interpolation path between $X_0$ and a fixed endpoint $X_1 = G_\theta(X_0)$. For any given $X_t$, there are multiple possible $(X_0, X_1)$ pairs that could have generated this intermediate point through linear interpolation. Specifically, for any $X_t$ and time $t \in (0,1)$, the set of possible endpoints is:
\begin{equation}
    \mathcal{X}_1(X_t, t) = \left\{ X_1 : X_t = (1-t)X_0 + tX_1 \text{ for some } X_0 \sim p_0 \right\}
\end{equation}

Each different $(X_0, X_1)$ pair gives a different conditional velocity $v^P(X_t | X_1) = X_1 - X_0$. However, a valid marginal velocity field $v^P(X_t, t)$ as required by~\cref{the:ivfm} must be a function that assigns a unique velocity vector to each $(X_t, t)$ pair. The conditional velocity $v^P(X_t | X_1)$ is multi-valued at each $X_t$, and thus cannot directly serve as a marginal velocity field in the sense of~\cref{the:ivfm}.

The proper marginal velocity field should be:
\begin{equation}
    v^P(X_t, t) = \mathbb{E}[X_1 - X_0 | X_t] = \mathbb{E}[v^P(X_t | X_1) | X_t]
\end{equation}

This expectation is taken over all possible $(X_0, X_1)$ pairs that could generate $X_t$ at time $t$. For a general student generator $G_\theta$ that produces a non-trivial distribution $P$, this conditional expectation will typically differ from any single conditional velocity $v^P(X_t | X_1)$ for a specific sample.

Suppose we attempt to match the conditional velocity to the teacher's marginal velocity:
\begin{equation}
    v^P(X_t | X_1) = u^Q(X_t, t)
\end{equation}

Since $v^P(X_t | X_1) = X_1 - X_0$, this would require:
\begin{equation}
    X_1 - X_0 = u^Q(X_t, t) = u^Q((1-t)X_0 + tX_1, t)
\end{equation}

This is an equation that $X_1 = G_\theta(X_0)$ must satisfy for all $t \in [0,1]$ and all $X_0 \sim p_0$. However, the right-hand side $u^Q(X_t, t)$ varies with $t$, while the left-hand side $X_1 - X_0$ is independent of $t$ (it's determined solely by $X_0$ through the generator $G_\theta$).

For this equation to hold for all $t$, we would need:
\begin{equation}
    u^Q((1-t)X_0 + tX_1, t) = \text{constant with respect to } t
\end{equation}

This is generally not true for a teacher flow $Q$. The teacher's velocity field typically varies along the trajectory from $X_0$ to $X_1$.

To minimize the mismatch, the optimization will push $G_\theta$ toward a solution that minimizes:
\begin{equation}
    \mathbb{E}_{t, X_0}\left[ \left\| (X_1 - X_0) - u^Q(X_t, t) \right\|^2 \right]
\end{equation}

Taking the gradient with respect to $X_1$ and setting it to zero:
\begin{equation}
    \mathbb{E}_{t}\left[ X_1 - X_0 - u^Q((1-t)X_0 + tX_1, t) \right] = 0
\end{equation}

For a fixed $X_0$, this suggests:
\begin{equation}
    X_1 \approx X_0 + \mathbb{E}_t[u^Q(X_t, t)]
\end{equation}

However, since the optimization is also over the choice of $G_\theta$ and the expectation is taken over $X_0 \sim p_0$, the solution that minimizes the loss in expectation is one where $G_\theta$ becomes independent of $X_0$:
\begin{equation}
    G_\theta^*(X_0) \approx \mathbb{E}_{X_0, t}[X_0 + u^Q(X_t, t)] \approx \mathbb{E}_{X \sim Q}[X] = \bar{X}
\end{equation}

This is a deterministic generator that maps all inputs to the same constant output, resulting in a degenerate student distribution $P = \delta_{\bar{X}}$, which clearly violates $P = Q$ unless $Q$ itself is a point mass.

When using the auxiliary model $v^P(X_t, t) = \mathbb{E}[X_1 - X_0 | X_t]$, we obtain a proper marginal velocity field that assigns a unique velocity vector to each $(X_t, t)$ pair. This marginal velocity field can satisfy the instantaneous velocity field matching condition in~\cref{the:ivfm}:
\begin{equation}
    v^P(X_t, t) = u^Q(X_t, t) \quad \text{for all } (X_t, t)
\end{equation}
By~\cref{the:ivfm}, this equality of instantaneous marginal velocity fields guarantees $P = Q$, achieving the correct distributional matching. The auxiliary model learns this marginal field by training on samples from the evolving student distribution, effectively computing the conditional expectation $\mathbb{E}[X_1 - X_0 | X_t]$ through regression.

\textbf{Conclusion.}

The conditional velocity field $v^P(X_t | X_1)$ is fundamentally inadequate for distillation because it is not a valid marginal velocity field as required by~\cref{the:ivfm}. It is multi-valued at each $X_t$ (depending on which specific $X_1$ generated that point) and cannot represent a proper flow. The auxiliary flow model $v^P(X_t, t)$, which learns the marginal velocity field through conditional expectation, is not merely a technical convenience but a theoretical necessity for achieving distributional equivalence as stated in~\cref{the:ivfm}. The empirical training failures observed in our experiments directly confirm this mathematical conclusion.

\section{Additional Experiment Results}
\label{SM:add_exp}

\subsection{CIFAR-10 Distribution Coverage Analysis}
\label{SM:cifar10}

To directly evaluate whether MFD preserves distributional coverage rather than only improving fidelity, we conduct an additional unconditional CIFAR-10~\cite{cifar10} experiment. All single-step methods and the teacher use the same approximately 42M-parameter UNet backbone under comparable training settings. We report FID, precision, and recall, where recall is the primary coverage-oriented metric for assessing possible mode dropping.

\begin{table}[t]
    \caption{CIFAR-10 evaluation. All single-step methods and the teacher share the same approximately 42M-parameter UNet backbone.}
    \label{tab:cifar10}
    \centering
    \begin{tabular}{lrrrr}
        \toprule
        Method & NFE & FID~$\downarrow$ & Precision~$\uparrow$ & Recall~$\uparrow$ \\
        \midrule
        Teacher (Flow Matching) & 100 & 17.93 & 0.764 & 0.261 \\
        Diff-Instruct (VSD) & 1 & 29.10 & 0.757 & 0.190 \\
        DMD2 & 1 & 26.47 & 0.759 & 0.173 \\
        IMM & 1 & 26.02 & 0.746 & 0.188 \\
        MeanFlow & 1 & 24.36 & 0.750 & 0.207 \\
        \textbf{MFD (ours)} & 1 & \textbf{22.65} & \textbf{0.759} & \textbf{0.212} \\
        \bottomrule
    \end{tabular}
\end{table}

\Cref{tab:cifar10} shows that MFD achieves the best FID and recall among all single-step methods while maintaining high precision. The recall results are particularly informative: DMD2 retains 66.3\% of the teacher's recall (0.173 vs. 0.261) and Diff-Instruct retains 72.8\% (0.190 vs. 0.261), whereas MFD retains 81.2\% (0.212 vs. 0.261). MFD also improves over IMM and MeanFlow, which are trained from scratch rather than distilled from the teacher. This indicates that MFD mitigates the recall degradation commonly associated with distillation.

These results are consistent with the theoretical objective of MFD. Score-distillation methods align single-timestep scores or velocities, whose training signal can be weak in low-density regions. MFD supervises the student through mean velocities over full time intervals and, under the assumptions of \cref{the:afd}, converges to $P=Q$ when the expected average velocity fields match. Since MFD is a distillation method, its diversity is bounded by the teacher distribution; the relevant empirical question is therefore how much coverage the student retains relative to the teacher. The CIFAR-10 recall results show that MFD retains substantially more teacher coverage than other single-step distillation baselines.

\subsection{Toy Example of Mean Flow Distillation}
\label{SM:toy_exp}

To provide concrete visual evidence for the variance reduction property of MFD, we conduct a controlled experiment on a 2D toy example where we can directly visualize and compare the gradient vector fields produced by VSD and MFD under identical noise conditions. The experiment is designed to isolate and quantify the temporal filtering effect that distinguishes mean flow matching from instantaneous velocity matching.

We construct a simple setup where a student distribution $P$ is being aligned with a teacher distribution $Q$ in a 2D space. To ensure a fair comparison, both VSD and MFD experiments are initialized from the same pretrained flow matching model and use this identical model as the teacher $u^Q$, differing only in their supervision strategies. For VSD, we compute the instantaneous velocity field discrepancy at a fixed time point $t=0.5$: $\Delta_{VSD} = v^P(X_t, t) - u^Q(X_t, t)$, which represents the supervision signal that would be used to update the student model. For MFD, we compute the average velocity field discrepancy over a time interval $[s, t]$: $\Delta_{MFD} = \hat{U}^P(X_s, s, t) - \hat{U}^Q(X_s, s, t)$, where the mean velocities $\hat{U}$ are obtained through ODE integration with $K=12$ steps.

To simulate the optimization noise inherent in real training scenarios, we model the auxiliary model's velocity output as $\hat{v} = v + \xi$, where $\xi \sim \mathcal{N}(0, \sigma^2)$ represents independent Gaussian noise at each evaluation point. Crucially, in the VSD case, this noise appears directly in the gradient signal at the single sampled time point, fully preserving the high-frequency fluctuations. In the MFD case, however, we inject independent noise $\xi_k$ at each of the $K$ integration steps along the trajectory. According to the law of large numbers, these zero-mean independent noise terms partially cancel out during the temporal integration, resulting in an effective variance reduction by a factor of $1/K$ as proven in~\cref{SM:low_var}. To ensure fair comparison, we normalize the noise injection such that both methods experience equivalent total noise energy, and we further normalize the resulting vector fields by their respective average magnitudes to compare directional stability rather than absolute scale.

The visualization in~\cref{fig:2D_gradient_comparison} presents the resulting gradient vector fields side by side. The VSD field exhibits chaotic, wildly varying directions across the spatial domain—the vectors appear to ``explode'' in random directions due to the unfiltered noise at each evaluation point. In stark contrast, the MFD field maintains smooth, coherent flow structure despite being subjected to noise of equivalent total energy. The vectors in the MFD case consistently point in similar directions within local neighborhoods, demonstrating that the temporal integration successfully acts as a low-pass filter that suppresses high-frequency noise while preserving the underlying transport direction. This visual comparison directly validates our theoretical claim that MFD functions as a temporal smoothing mechanism, transforming noisy instantaneous supervision into stable trajectory-level guidance. The quantitative analysis further confirms that the gradient variance of MFD scales inversely with the number of integration steps $K$, providing both intuitive and rigorous evidence for why mean flow distillation achieves superior training stability compared to instantaneous velocity matching approaches.

\begin{figure}[t]
  \begin{center}
    \includegraphics[width=0.9\columnwidth]{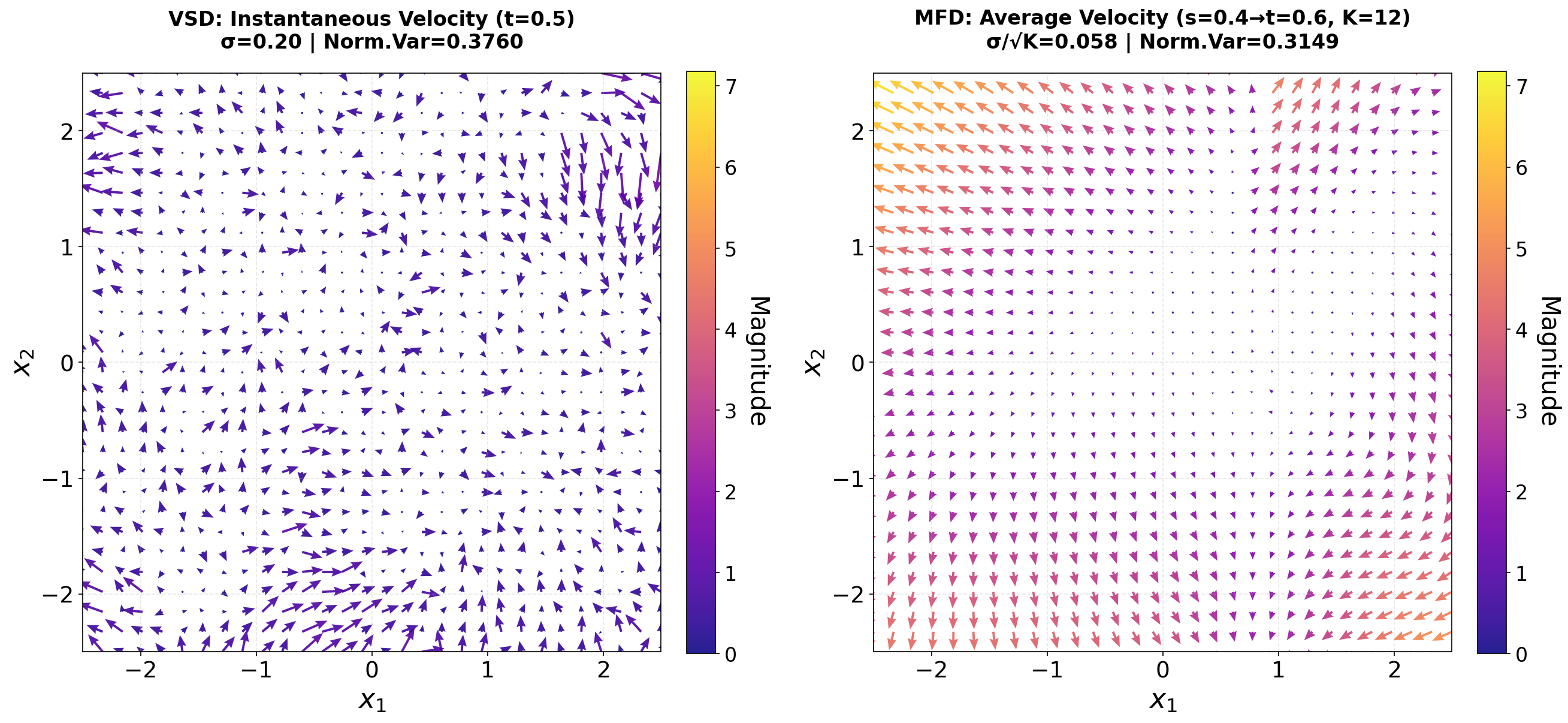}
    \caption{Comparison of gradient vector fields on a 2D toy example under equivalent noise conditions. Left: VSD with instantaneous velocity matching exhibits chaotic, high-variance gradient directions. Right: MFD with temporal integration over $K$ steps shows smooth, coherent flow structure.}
    \label{fig:2D_gradient_comparison}
  \end{center}
  \vskip -0.15in
\end{figure}

\subsection{Qualitative Results on 4D Occupancy Forecasting}
\label{SM:4D_vis}

\Cref{fig:4D_vis} presents a qualitative comparison of 4D occupancy forecasting results on the nuScenes dataset. When the teacher model OccFM is forced to perform single-step sampling (NFE=1), it produces severely degraded predictions with missing structures and noisy artifacts, as shown in column (c). In contrast, the student model distilled by MFD with the same single-step inference (column b) preserves fine-grained semantic details and spatial structures that are highly comparable to the ground truth (column a). This visual comparison demonstrates the effectiveness of mean flow distillation in maintaining generation quality under extreme acceleration, corroborating the quantitative improvements reported in the main text.

\begin{figure*}[t]
  \begin{center}
    \includegraphics[width=\textwidth]{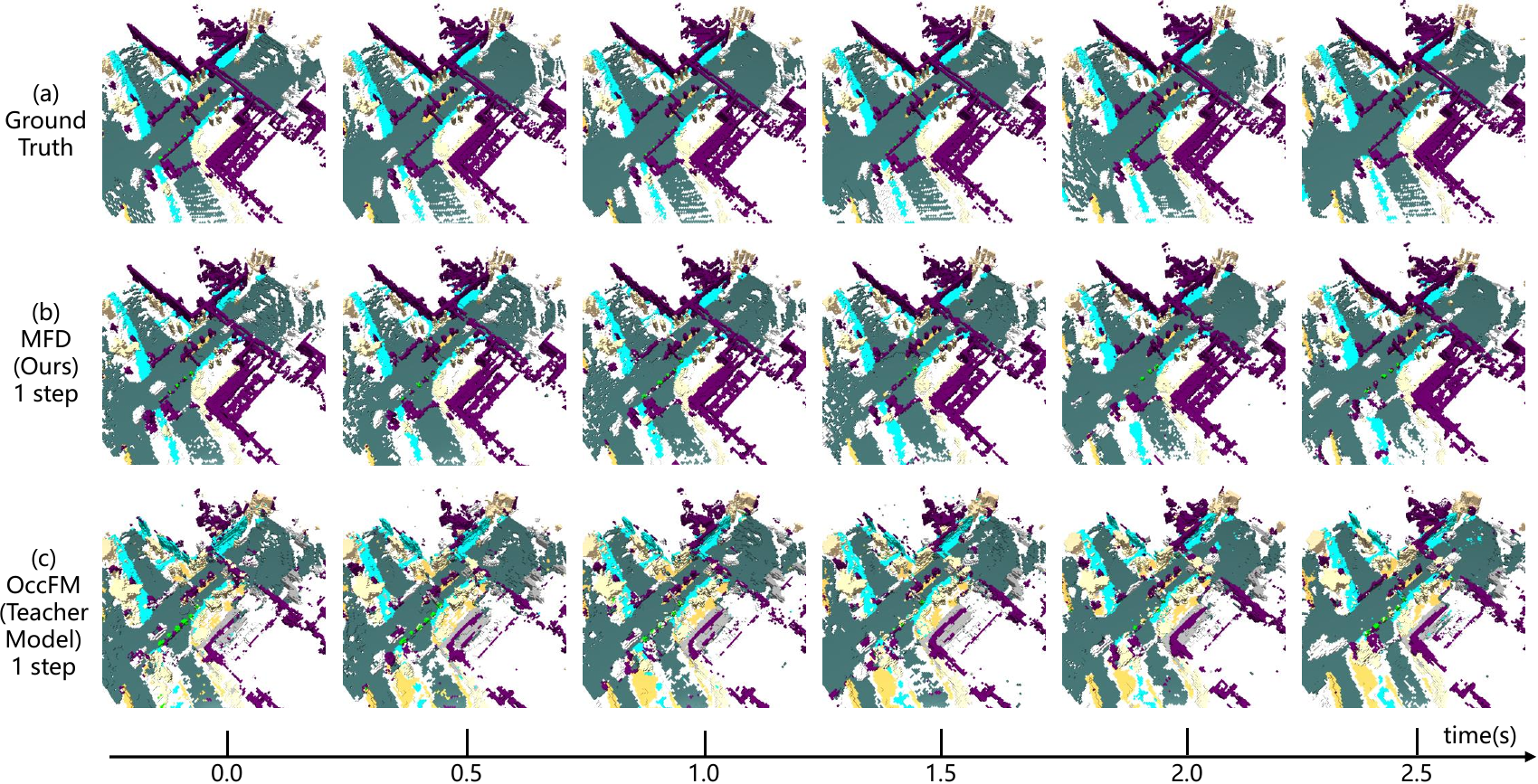}
    \caption{Qualitative comparison of 4D occupancy forecasting results on nuScenes. We visualize the predicted occupancy grids from (a) ground truth, (b) our MFD distilled student model (NFE=1), and (c) the teacher model OccFM with single-step sampling (NFE=1).}
    \label{fig:4D_vis}
  \end{center}
  \vskip -0.2in
\end{figure*}

\subsection{Ablation Study of ODE Step Size on Text-to-Image Generation}
\label{SM:ablation_t2i}

The ablation study on ODE solver step size on text-to-image generation is shown in~\cref{tab:ablation_t2i}.
 
Moderate step sizes (0.005 and 0.03) achieve strong performance, with 0.03 yielding the best results across most metrics. This optimal step size corresponds to approximately 30-35 integration steps, which strikes an effective balance between capturing the mean flow trajectory accurately and providing robust gradient signals through natural averaging of noisy velocity predictions. 

Larger step sizes (0.1 and 0.5) show mixed results. Interestingly, step size 0.1 achieves the highest Aesthetic Score (6.546) and competitive CLIP Score (29.441), suggesting that for certain perceptual qualities, a coarser mean flow approximation with ~10 integration steps may be sufficient. However, the ImageReward metric drops to 0.8264 at 0.1 and further degrades to 0.7582 at 0.5, indicating that extremely coarse approximations fail to capture fine-grained semantic alignment with human preferences. At step size 0.5 (only 2 integration steps), the mean flow computation approaches a single-step approximation similar to VSD, forfeiting the variance reduction benefits and resulting in degraded performance across most metrics.

The optimal step size of 0.03 represents a sweet spot where the variance reduction is substantial enough to stabilize training, while avoiding excessive numerical errors or over-smoothing that could harm generation quality. This consistency across both 4D occupancy forecasting and text-to-image generation demonstrates that MFD's performance is robust to the choice of integration granularity, with a wide range of moderate step sizes (0.01-0.05) yielding near-optimal results.

\begin{table*}[t]
  \caption{Comparison with Baselines on Text-to-Image Generation.}
  \label{tab:ablation_t2i}
  \centering
        \begin{tabular}{lrrrrr}
          \toprule
          ODE Step Size & Aesthetic Score~$\uparrow$ & PickScore~$\uparrow$ & HPSv2~$\uparrow$ & ImageReward~$\uparrow$ & CLIP Score~$\uparrow$  \\
          \midrule
          0.001     & 6.498 &  0.1989 & 0.2419 & 0.8306 & 29.403   \\
          0.005     &  6.521 & 0.1987 & 0.2424 & 0.8303 & 29.359 \\
          0.03    & 6.541 & \textbf{0.2002} & \textbf{0.2442} & \textbf{0.8483}  & \textbf{29.430}  \\
          0.1 & \textbf{6.546} & 0.1984 & 0.2433 & 0.8264 & 29.441 \\
          0.5    & 6.457 & 0.1993 & 0.2431 & 0.7582 & 29.375    \\
          \bottomrule
        \end{tabular}
  \vskip -0.1in
\end{table*}

\subsection{Variance Analysis on Text-to-Image Generation}
\label{SM:variance_t2i}

To provide detailed empirical evidence for the variance reduction property of MFD claimed in~\cref{SM:low_var}, we conduct a comprehensive variance analysis on the text-to-image generation task. \Cref{fig:loss_curve} compares the training loss trajectories of MFD and the VSD-based baseline Diff-Instruct.

Throughout the entire training process, MFD demonstrates significantly smoother convergence with substantially reduced loss fluctuations, confirming that the temporal integration in mean flow matching effectively suppresses optimization noise. The training loss curve provides a continuous view of optimization dynamics across all training iterations, complementing the snapshot-based violin plot analysis presented in the main paper.

\begin{figure*}[t]
  \begin{center}
    \includegraphics[width=0.7\textwidth]{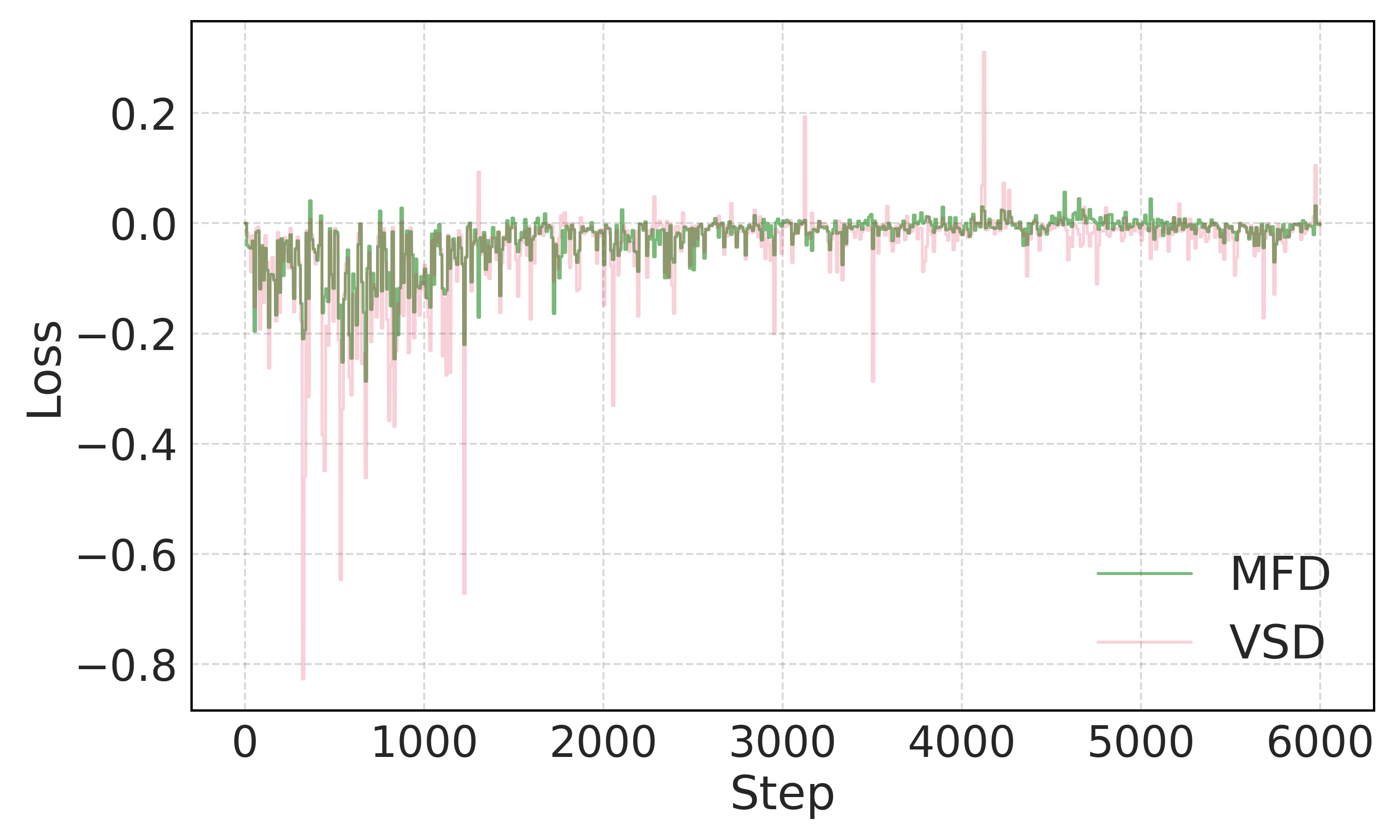}
    \caption{Training loss curves comparing MFD and VSD-based Diff-Instruct on text-to-image generation. MFD exhibits significantly lower variance throughout training, validating our theoretical analysis in~\cref{SM:low_var} that the temporal integration in mean flow matching acts as a variance-reduction mechanism.}
    \label{fig:loss_curve}
  \end{center}
  \vskip -0.2in
\end{figure*}

The results demonstrate that MFD maintains consistently lower loss variance across the entire training trajectory. This empirical observation directly corroborates our theoretical analysis in~\cref{SM:low_var}, where we prove that the variance of MFD's gradient estimator scales as $\frac{1}{K}\text{Var}(g_{\text{VSD}})$, with $K$ being the number of integration steps. The temporal integration in mean flow computation acts as a variance reduction mechanism by averaging out high-frequency noise in instantaneous velocity estimates, resulting in more reliable gradient signals for student model optimization.

\subsection{Mean Flow Distance Evolution Analysis}
\label{SM:mf_distance}

To further understand the training dynamics of MFD, we analyze the evolution of the L2 distance between the teacher mean flow $\hat{U}^Q$ and the auxiliary mean flow $\hat{U}^P$ throughout the distillation process. \Cref{fig:mf_distance} visualizes this distance (in log scale) as a function of training steps during text-to-image generation experiments.

\begin{figure*}[t]
  \begin{center}
    \includegraphics[width=0.7\textwidth]{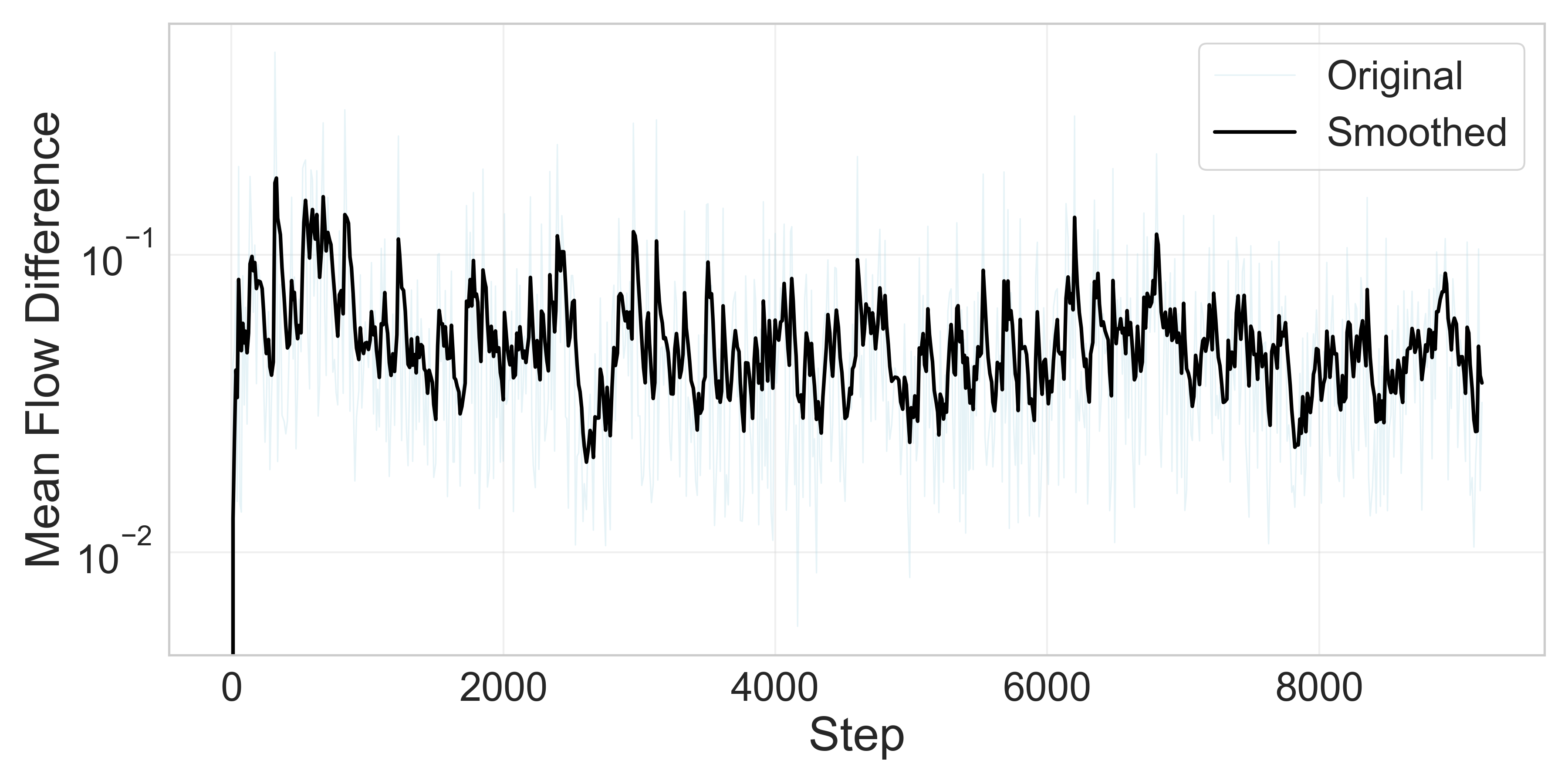}
    \caption{Evolution of L2 distance (in log scale) between teacher mean flow $\hat{U}^Q$ and auxiliary mean flow $\hat{U}^P$ during text-to-image distillation training. The trajectory reveals three distinct phases: rapid initial increase, gradual descent during adaptation, and sustained slow decline reflecting the co-evolution dynamics of MFD.}
    \label{fig:mf_distance}
  \end{center}
  \vskip -0.2in
\end{figure*}

The trajectory reveals three distinct phases of the distillation process. \textbf{Phase 1 (Early Training):} During the initial training phase, the L2 distance between mean flows rapidly increases. This counter-intuitive behavior has a clear explanation: both the auxiliary model $v_\phi$ and student generator $G_\theta$ are initialized from the pretrained teacher model weights. At initialization, the auxiliary model accurately represents the teacher distribution $Q$, hence the mean flows are nearly identical. However, as training begins, the auxiliary model $v_\phi$ requires time to adapt and learn to approximate this new, evolving student distribution $P$. During this adaptation period, the auxiliary mean flow temporarily diverges from the teacher mean flow, causing the observed increase in distance.

\textbf{Phase 2 (Adaptation):} After the initial divergence, the distance stabilizes and begins a gradual descent. This indicates that the auxiliary model successfully captures the student distribution and the overall system starts to align the student's distribution $P$ with the teacher's distribution $Q$ through mean flow matching. The MFD objective effectively guides both models toward distributional equivalence.

\textbf{Phase 3 (Sustained Training):} Notably, the distance does not rapidly converge to zero but instead maintains a gradual, slow declining trend. This behavior is fundamentally different from standard supervised learning scenarios and reflects the unique co-evolutionary dynamics of distillation. Unlike traditional training where the target is fixed, in MFD both the student model $G_\theta$ and the auxiliary model $v_\phi$ are continuously updated. Each update to $G_\theta$ changes the student distribution $P$, which in turn requires $v_\phi$ to adapt. This creates a moving target scenario: the auxiliary model must track a continuously evolving distribution rather than converge to a static target. The sustained distance reflects this ongoing adaptation process, where the auxiliary model constantly adjusts to match the current student distribution while the student simultaneously moves toward the teacher distribution.

This analysis provides crucial insight into the convergence behavior of MFD. The gradual distance reduction, rather than rapid collapse, indicates healthy training dynamics where both models are learning in tandem. The fact that the distance eventually trends downward confirms that the distillation objective successfully encourages alignment between student and teacher distributions, despite the complexity introduced by the co-evolving auxiliary model. This empirical observation validates the theoretical framework of MFD and demonstrates that the auxiliary flow model can effectively track the student distribution even as it continuously changes during training.

\subsection{The Generation Samples of Text-to-Image Generation}

To further demonstrate the visual quality and diversity of MFD-distilled models, we present qualitative generation results in~\cref{fig:t2i_vis}. The figure showcases an 6$\times$6 grid of images generated by our student model using 4-step inference across diverse text prompts, illustrating the model's ability to synthesize high-fidelity images with rich semantic details and visual coherence.

\begin{figure*}[t]
  \begin{center}
    \includegraphics[width=\textwidth]{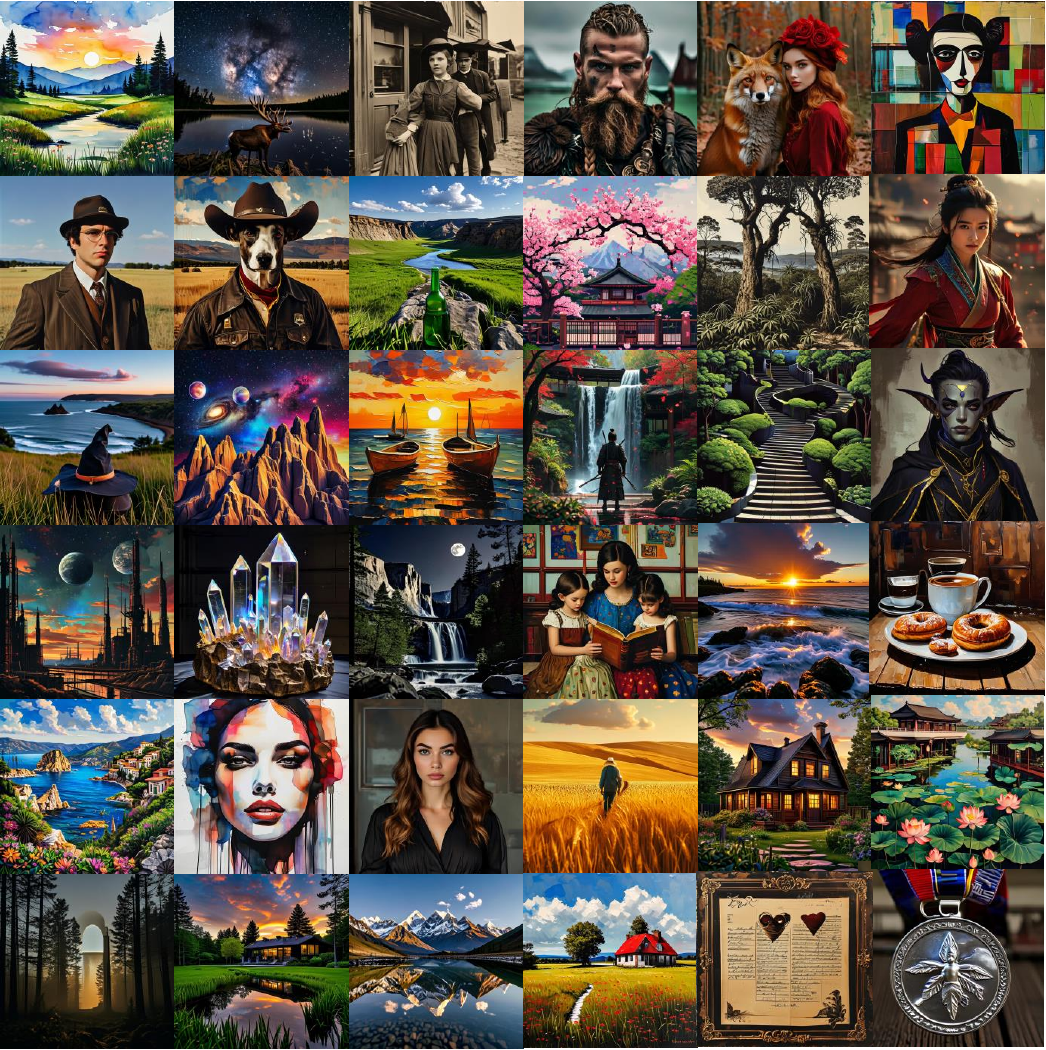}
    \caption{Qualitative results of text-to-image generation using MFD-distilled student model with 4-step inference. The 6$\times$6 grid displays diverse generated images across various text prompts, demonstrating high visual quality and generation diversity.}
    \label{fig:t2i_vis}
  \end{center}
  \vskip -0.2in
\end{figure*}

\section{Ethical Statement}
The potential ethical impact of our work is about fairness.
As ``human'' is included as a kind of object in the LiDAR scene, when performing scene completion, it may be necessary to complete human figures. Human-related objects may exhibit data bias related to fairness issues, such as biases in gender or skin color. Such bias can be captured by the student model in the training. 

\subsection{Notification to human subjects}

This work focuses on improving the efficiency of generative and predictive models through distillation and does not introduce new data sources or deployment mechanisms.

In the autonomous driving experiments, our model operates on LiDAR-based 4D occupancy representations, which encode anonymous spatial occupancy and motion patterns without containing personally identifiable information or visual attributes of individuals. As such, the proposed method does not involve identity recognition, demographic inference, or decision-making about human attributes.

In the text-to-image generation experiments, the model is trained and evaluated on publicly available datasets and generates generic image content rather than representations of real individuals.

While the proposed method may contribute to applications in safety-critical systems such as autonomous driving, it is intended as a research tool for improving model efficiency. Responsible deployment, system-level validation, and adherence to safety regulations remain the responsibility of downstream application developers.

\section{Limitation}
While Mean Flow Distillation demonstrates strong performance and improved training stability across several challenging tasks, this work still has some limitations. First, MFD relies on numerical ODE integration to estimate the mean velocity fields of both the teacher and auxiliary models. Although this integration is performed over relatively short intervals and uses a small number of steps, it inevitably introduces approximation errors that depend on the choice of solver and step size. Our ablations show that moderate step sizes are robust, but practitioners may still need a small sweep when transferring MFD to new domains. More adaptive or solver-aware integration strategies could further improve robustness.

Second, MFD introduces an auxiliary flow model to approximate the student-induced distribution, which increases training complexity and computational overhead compared to purely single-network distillation approaches. The auxiliary model is training-only and can be implemented with lightweight adapters in large-scale settings, but reducing its training cost remains an important direction for future work.

Finally, our theoretical analysis assumes Lipschitz continuity of the velocity fields and sufficient capacity of the auxiliary model to approximate the student distribution. In addition, the practical optimization omits the Jacobian term through a fixed-guidance approximation; while empirically necessary and consistent with prior distillation practice, a tight theoretical bound on this approximation remains open. Exploring sharper theoretical guarantees and broader empirical evaluations is left to future work.

\end{document}